\colorlet{mylinkcolor}{blue}
\colorlet{mycitecolor}{orange}
\colorlet{myurlcolor}{orange}
\DeclareMathOperator{\vect}{vec}
\DeclareMathOperator{\tr}{tr}
\DeclareMathOperator{\spn}{span}
\newcommand{\B}[1]{\bm{#1}}
\newcommand{\vcr}[1]{\bm{#1}}
\newcommand{\mat}[1]{\bm{#1}}
\newcommand{\tsr}[1]{\pmb{\mathcal{#1}}}
\newcommand{\inti}[2]{\{{#1},\ldots, {#2}\}}
\newcommand{\R}{\mathbb{R}}
\definecolor{mygreen}{rgb}{0,0.2,0}
\definecolor{mygray}{rgb}{0.5,0.5,0.5}
\definecolor{mymauve}{rgb}{0.58,0,0.82}
\definecolor{mypurple}{rgb}{0.38,0,0.32}
\definecolor{myblue}{rgb}{0.1,0,0.32}
\newcommand{\costyle}{\footnotesize\ttfamily\bfseries}
\newcommand{\kwstyle}{\costyle\textcolor{myblue}}
\tiny\color{mygray}, % the style that is used for the line-numbers
\def\BibTeX{{\rm B\kern-.05em{\sc i\kern-.025em b}\kern-.08em
    T\kern-.1667em\lower.7ex\hbox{E}\kern-.125emX}}
\begin{document}

\title{ Alternating Mahalanobis Distance Minimization for Stable and Accurate CP Decomposition %\thanks{Submitted to the editors \today.
}

\author{Navjot Singh\thanks{Department of Computer Science, University of Illinois at Urbana-Champaign, Urbana, IL, 61801 (\email{navjot2@illinois.edu}, \email{solomon2@illinois.edu}).} \and Edgar Solomonik\footnotemark[1] .}

\maketitle

\begin{abstract}
CP decomposition (CPD) is prevalent in chemometrics, signal processing, data mining and many more fields. While many algorithms have been proposed to compute the CPD, alternating least squares (ALS) remains one of the most widely used algorithm for computing the decomposition. Recent works have introduced the notion of eigenvalues and singular values of a tensor and explored applications of eigenvectors and singular vectors in areas like signal processing, data analytics and in various other fields. We introduce a new formulation for deriving singular values and vectors of a tensor by considering the critical points of a function different from what is used in the previous work. Computing these critical points in an alternating manner motivates an alternating optimization algorithm which corresponds to alternating least squares algorithm in the matrix case. However, for tensors with order greater than equal to $3$, it minimizes an objective function which is different from the commonly used least squares loss. Alternating optimization of this new objective leads to simple updates to the factor matrices with the same asymptotic computational cost as ALS. We show that a subsweep of this algorithm can achieve a superlinear convergence rate for exact CPD with known rank and verify it experimentally. We then view the algorithm as optimizing a Mahalanobis distance with respect to each factor with ground metric dependent on the other factors. This perspective allows us to generalize our approach to interpolate between updates corresponding to the ALS and the new algorithm to manage the tradeoff between stability and fitness of the decomposition. Our experimental results show that for approximating synthetic and real-world tensors, this algorithm and its variants converge to a better conditioned decomposition with comparable and sometimes better fitness as compared to the ALS algorithm.
\end{abstract}

\begin{keywords}
 tensor decomposition, CP decomposition, alternating least squares, eigenvalues, singular values, Mahalanobis Distance, condition number
\end{keywords}
\begin{AMS}
  15A69, 15A72, 65K10, 65Y20, 65Y04, 65Y05, 68W25
\end{AMS}
%
%%\ifpdf
%%\hypersetup{ pdftitle={M norm cpd} }
%%\fi
%
%\headers{Header1}{ Header2}

\section{Introduction}
\label{sec:intro}
The canonical polyadic or CANDECOMP/PARAFAC (CP) tensor decomposition~\cite{hitchcock1927expression,harshman1970foundations} is used for analysis and compression of multi-parameter datasets, and prevalent in tensor methods for scientific simulation~\cite{murphy2013fluorescence,sidiropoulos2017tensor,maruhashi2011multiaspectforensics,cong2015tensor}. 
For an order $3$ tensor $\tsr{T}$, a rank $R$ CP decomposition is
   \begin{align*}
    \tsr{T} = [\![ \mat{A}, \mat{B}, \mat{C} ]\!], \quad
    t_{ijk} = \sum_{r=1}^R a_{ir} b_{jr} c_{kr}.
    \end{align*}
Determining the CP rank or finding an approximate CP decomposition of a tensor, so as to minimize,
\begin{align}
\label{eq:CP_objective}
    f(\mat A, \mat B, \mat C) =
\frac{1}{2}\Big\|\tsr{T} - [\![ \mat{A}, \mat{B}, \mat{C} ]\!]\Big\|^2_F,
\end{align}
are NP-hard problems~\cite{Hillar:2013}. The CP decomposition of a tensor can be computed via various optimization algorithms, such as alternating least squares~\cite{karlsson2016parallel,hayashi2017shared,schatz2014exploiting, battaglino2017practical} which aims to minimize the objective~\eqref{eq:CP_objective} in an alternating manner by considering all except one factor matrix fixed. There have been several attempts to improve the performance of ALS algorithm by considering it's variations~\cite{rajih2008enhanced,nion2008enhanced,mitchell2018nesterov,ma2018accelerating}. Several methods which aim to minimize~\eqref{eq:CP_objective} with respect to all the factor matrices use gradient-based information~\cite{acar2011scalable,paatero1997weighted,phan2013low,sorber2013optimization,tichavsky2013further,tomasi2006comparison} to update all the factors simultaneously. Another set of methods  optimize for all the factors simultaneously by formulating~\eqref{eq:CP_objective} as a nonlinear least squares problem by considering the entries of all the factors as variables. In addition to gradient information, these iterative methods use second order information to compute the next step which requires a system solve~\cite{paatero1997weighted,tomasi2005parafac}, and can be achieved by an implicit conjugate gradient algorithm~\cite{sorber2013optimization,singh2021comparison}.

Tensor eigenvalue problems are relevant in the context of solving multilinear systems, simulating quantum systems, exponential data fitting and many other application areas~\cite{qi2018tensor}. However, the study of tensor eigenvalues and tensor singular values is at a relatively nascent stage,~\cite{lim2005singular,li2013z} provide a definition and introduction to eigenvalues and singular values of a tensor. Computing eigenvalues of a tensor is a hard problem, and can be solved via iterative methods for special cases such as computing a subset of eigenvalues of a tensor~\cite{kolda2011shifted} or computing the real eigenvalues pairs of a real symmetric tensor~\cite{cui2014all}. The tensor eigenvalue problem is motivated by applications like blind source separation~\cite{blindsource}, independent component analysis (ICA)~\cite{hyvarinen1999survey} which also motivate a closely related problem of diagonalizing a tensor. The concept of tensor diagonalization was introduced in~\cite{comon1994tensor}, where approximate diagonalization of the tensor is considered by minimizing the sum of squares of off-diagonal entries or maximizing the sum of squares of diagonal entries of the tensor. There have been many follow up works~\cite{liang2019further,li2018globally,usevich2020approximate,tichavsky2016nonorthogonal} which consider approximate diagonalization of the tensor by invertible and orthogonal transformations.

In this work, we introduce a formulation for computing the singular values and vectors of a tensor by considering a logarithmic penalty function instead of Lagrangian variables~\cite{lim2005singular} and computing the critical points of this function. This formulation when generalized to computing invariant subspaces of a matrix, leads to diagonalization of the matrix and can be linked to the singular values and vectors of the matrix. When extended to tensors with order greater than or equal to $3$, this formulation leads to another notion of diagonalization of the tensor which is different from the one introduced in prior work. The critical points of this new function spectrally diagonalize the tensor, i.e., the  transformed equidimensional tensor of mode length $R$ has $R$ elementary eigenvectors with unit eigenvalues. Computing these stationary points alternatively motivates an alternating optimization algorithm for computing the CP decomposition of a tensor.

\subsection{Motivation: Eigenvectors via Lagrangian Optimization}
In the case of low-rank matrix approximation, the Eckart-Young-Mirsky theorem shows that the best low-rank approximation may be obtained from the singular value decomposition (SVD).
This connection relates low-rank factors to critical points of the bilinear form, \(f(\B x, \B y) = \B x^T \mat{A}\B y\) with $\|\B x\| \neq 0$, $\|\B y\| \neq 0$.
For tensors of order 3 and higher, tensor singular values have been similarly derived from critical points of multilinear forms.
In particular, Lim~\cite{lim2005singular} derives singular vectors and singular values by imposing constrains $\|\B x\| = \|\B y \|=1$ and considering the critical points of the Lagrangian function.
The same results may be obtained by instead considering a logarithmic interior point barrier function for the constraints, $\|\B x\| \neq 0$, $\|\B y\| \neq 0$, so
\[f(\B x, \B y) = \B x^T \mat{A}\B y - \log(\|\B x\|\|\B y\|), \quad \nabla f(\B x,\B y) = \B 0 \ \Rightarrow\ \mat{A}\B y = \B x/\|\B x\|^2, \ \mat{A}^T\B x = \B y/\|\B y\|^2.\]
Consequently, with $\sigma = 1/(\|\B x\|\|\B y\|)$ and $\B u = \B x / \|\B x\|$, $\B v = \B y / \|\B y\|$, we have $\mat{A} \B v = \sigma \B u$ and $\mat A^T \B u = \sigma \B v$.
The use of a coefficient for the barrier function only affects the scaling of any critical point vectors, $\B x$ and $\B y$.
For tensors of order 3 and higher, tensor singular values can be similarly derived from critical points of 
\begin{align*}
f(\B x^{(1)}, \ldots, \B x^{(N)}) &= \sum_{i_1\ldots i_N}t_{i_1\ldots i_N}x^{(1)}_{i_1} \cdots x^{(N)}_{i_N}  - \log(\|\B x^{(1)}\|\cdots \|\B x^{(N)}\|), \\
\nabla f(\B x^{(1)}, \ldots, \B x^{(N)}) &= \B 0 \ \Rightarrow\ \frac{x^{(j)}_{i_j}}{\|\B x^{(j)}\|^2} = \sum_{i_1\ldots \hat{i}_j \ldots i_N} t_{i_1\ldots i_N}x^{(1)}_{i_1} \cdots \hat{x}^{(j)}_{i_j} \cdots x^{(N)}_{i_N},
\end{align*}
where $i\ldots \hat{j}\ldots k$ implies $j$ is omitted from the sequence.
The use of 2-norm in the above definitions leads to $l^2$ singular vectors~\cite{lim2005singular} and with a symmetric tensor and each $\B x^{(i)}=\B x^{(j)}$ it yields Z-eigenvectors~\cite{li2013z}.
Choosing another vector norm in $\{3,\ldots, N\}$ leads to other notions of singular vectors and eigenvectors~\cite{li2013z}.

The only significant known correspondence between tensor singular vectors or eigenvectors and the CP decomposition, is in the case of a rank $R=1$ CP.
In this case, the singular vector with the largest singular value and the best rank-1 approximation coincide (for symmetric tensors, these also correspond to the largest eigenvalue tensor eigenvector).
The rank-1 approximation problem is also NP-hard for tensors of order 3 and higher~\cite{Hillar:2013}.
In this work, motivated by an efficient iterative scheme, we consider an extension of the variational notion of a single singular vector tuple to many.

\subsection{Tensor Spectral Diagonalization via Lagrangian Optimization}
\label{subsec:spectral_lagrange}

We denote an inner product of matrices as $\langle \B X, \B Y \rangle = \langle \vect(\mat X), \vect(\mat Y) \rangle$, and similar for tensors $\tsr X$, $\tsr Y$. 
The invariant subspaces of a matrix $\mat A$ may be obtained by considering the critical points of a generalization of $\B x^T \mat  A \B y = \langle \mat A, \B x \B y^T\rangle$ to the matrix case,
\[f(\B X, \B Y) = \langle \mat A, \mat X \mat Y^T \rangle, \text{ s.t. }\ \det(\mat X^T \mat X)\neq 0, \det(\mat Y^T \mat Y) \neq 0.\]
%seek the critical points of (in the matrix case),
Transforming the inequality constraint into a logarithmic barrier function, we obtain
\begin{align}
\mathcal{L}_f(\B X, \B Y) &= \langle \mat A, \mat X \mat Y^T \rangle - \frac{1}{2}(\log(\det(\mat X^T \mat X)) - \log(\det(\mat Y^T \mat Y)))\\ 
&= \tr(\B X^T \mat{A}\B Y) - \frac{1}{2}\tr(\log(\B X^T \B X \B Y^T\B Y)). \label{eq:mat_crit}
\end{align}
The critical points of $\mathcal L_f$ satisfy,
%\[\B X^\dagger = (\mat{A}\B Y)^T, \quad \B Y^\dagger = (mat X^T\mat{A}.\]
\[\mat A \mat Y \mat X^T \cong \mat I \text{ and } \mat A^T \mat X \mat Y^T \cong \mat I.\]
%The function $\frac{1}{2}\tr(\log(\B X^T \B X \B Y^T\B Y\|)) = \sum_{i} \log(\sigma_i(\B X)) + \log(\sigma_i(\B Y))$, where $\sigma_i(\B Z)$ is the $i$th singular value of $\B Z$.
%Hence, we may interpret it as a barrier function that keeps $\B X$ and $\B Y$ nonsingular.
At a critical point of $f(\B X, \B Y)$, the column span of $\B X$, $\spn\{\B x_{1}, \ldots \B x_{n}\}$, must be an invariant subspace of $\B A \B A^T$, while the columns of $\B Y$ span an invariant subspace of $\B A^T\B A$. %Critical points are obtained by setting the columns of $\B X$ and $\B Y$ to be any distinct corresponding pairs of left and right singular vectors, scaled by the corresponding singular values as in the single singular vector case.
These critical points diagonalize $\mat{A}$ in the sense that $\B X^T \mat{A}\B Y = \B I$.
In the tensor case, a critical point $(\mat X^{(1)},\ldots, \mat X^{(N)})$ of
\begin{align}
f(\mat X^{(1)},\ldots, \mat X^{(N)}) &= 
\langle \tsr T,[\![ \mat{X}^{(1)}, \ldots \mat{X}^{(N)} ]\!]\rangle, \text{ s.t. } \det(\mat X^{(n)}{}^T\mat X^{(n)}) \neq 0, \forall n \in \{1,\ldots, N\} \nonumber, \\
\mathcal{L}_f(\mat X^{(1)},\ldots, \mat X^{(N)}) &= \sum_{r=1}^R \sum_{i_1 \ldots i_N} t_{i_1\ldots i_N}x^{(1)}_{i_1r}\cdots x^{(N)}_{i_Nr} - \frac{1}{2}\tr(\log({\B X^{(1)}}^T\B X^{(1)}\cdots {\B X^{(N)}}^T\B X^{(N)})) 
\label{eq:f_tsr}
\end{align}
%where each $\B X^{(i)}$ has $R$ columns, 
gives invariant subspaces of the tensor in the sense that, $\forall i\in\{1,\ldots,N\}$,
\[\forall \B v \in \spn\{\bigotimes_{j\neq i}\B x^{(j)}_{1},\ldots,\bigotimes_{j\neq i}\B x^{(j)}_{R}\}, %\underbrace{\to}_{\B T_{(i)}}
\quad \B T_{(i)}\B v \in
\spn\{\B x^{(i)}_1,\ldots, \B x^{(i)}_R\},\]
%\[\spn\{\B x^{(1)}_{1}\otimes \cdots\otimes \hat{\B x}^{(i)}_{1}\otimes\cdots \otimes \B x^{(N)}_{1},\ldots,\B x^{(1)}_{R}\otimes \cdots\otimes \hat{\B x}^{(i)}_{R}\otimes\cdots \otimes \B x^{(N)}_{R}\} \underbrace{\to}_{\B T_{(i)}}\spn\{\B x^{(i)}_1,\ldots \B x^{(i)}_R\},\]
where $\B T_{(i)}$ is the mode-$i$ matricization (unfolding) of the tensor $\tsr T$. 

Since the reconstructed tensor $\tilde{\tsr{T}}$, where \(\tilde{\tsr{T}} = [\![\mat Y^{(1)}, \ldots {\mat Y^{(N)}} ]\!],\; \mat Y^{(n)} = {\mat X^{(n)}{}^\dagger}^T,\; \forall n \in \{1,\ldots, N\} \), captures the action of $\B T_{(i)}$ on an invariant subspace, the application of each matricization may be performed with bounded backward error. In Section~\ref{subsec:Alternating_mahalanobis}, we show that the backward error is bounded by $\|\mat T_{(i)}\vcr z^\perp \|$, where $\vcr z^\perp$ is the projection of $\vcr z$ onto the orthogonal complement of column span of $\bigodot_{j=1, j\neq n}^N \mat X^{(j)}$. We show that this bound also holds for ALS, and in general for a family of algorithms based on alternating minimization of Mahalanobis distance~\cite{chandra1936generalised} between the input and reconstructed tensor.
%In particular, for any \(\forall \B v \in \spn\{\bigotimes_{j\neq i}\B x^{(j)}_{1},\ldots,\bigotimes_{j\neq i}\B x^{(j)}_{R}\},\)
%\end{align}
%where . We can observe that~\eqref{eq:multilinear_func} holds, since we can write $\vcr v$ as
%where
%\begin{align*}
%\begin{align*}
%\label{eq:multilinear_func}
%    \tilde{\mat T}_{(i)}\vcr v = \mat T_{(i)}\vcr v, \\
%    \vcr v &= \sum_{r=1}^R\lambda_r \bigotimes_{j=1,j \neq i}^N \vcr x^{(j)}_r.
%\end{align*}
%Therefore, the matrix-vector product $\mat T_{(i)} \vcr v$ for each $i$ is given as
%\begin{align*}
%    \mat T_{(i)}\vcr v &= \mat T_{(i)}\sum_{r=1}^R\lambda_r \bigotimes_{n=1,j \neq i}^N \vcr x^{(j)}_r = \sum_{r=1}^R\lambda_r \vcr y^{(i)}_r,\\
%    &= \mat Y^{(i)}\Big( \bigodot_{j=1,j \neq i}^N {\mat Y^{(j)}}^T \Big) \sum_{r=1}^R \lambda_r \bigotimes_{j=1,j \neq i}^N \vcr x^{(j)}_r, \\
%    &= \mat{\tilde{T}}_{(i)}\vcr v.
%\end{align*}
%This implies that we can bound the error $\| \mat T_{(i)} \vcr z -  \mat \tilde{\mat T}_{(i)} \vcr z\|$
%by $\|\mat T_{(i)} \vcr z^\perp\|$, where $\vcr z^\perp$ is the projection of $\vcr z$  onto the orthogonal complement of column span of $\bigodot_{j=1, j\neq i}^N \mat X^{(j)}$. 

Another observation regarding the critical points of~\eqref{eq:f_tsr} is that each matricization of the tensor reconstructed from a critical point, $\tsr{X} = [\![\mat X^{(1)}, \ldots {\mat X^{(N)}} ]\!]$ is a right inverse of the corresponding matricization of the input tensor $\tsr{T}$ when CP rank is equal to the mode length and more generally,
\[
\mat{T}_{(i)} \mat{X}_{(i)}^{T} = \mat \Pi^{(i)}, \quad \forall i \in \{1,\ldots,N\},
\]
where each $\mat \Pi^{(i)}$ is a projector onto the column space of $\mat X^{(i)}$. 
%While $\tsr{X}$ is not the Moore-Penrose inverse of $\tsr{T}$~\cite{sun2016moore,liang2019further}, it resembles properties of Moore-Penrose inverse of the best low rank approximation in the matrix case.
This $\tsr{X}$ satisfies some but not all of the properties of previously proposed generalizations of the Moore-Penrose inverse to tensors~\cite{sun2016moore,liang2019further}.
%, it resembles properties of Moore-Penrose inverse of the best low rank approximation in the matrix case.

Further, the critical point gives a transformation that spectrally diagonalizes $\tsr T$,
\[z_{j_1\ldots j_N} = \sum_{i_1\ldots i_N} t_{i_1\ldots i_N}x_{i_1j_1}^{(1)}\cdots x_{i_Nj_N}^{(N)},\]
so that $\tsr Z$ has $R$ eigenvectors that are elementary vectors with unit eigenvalues (for any tensor eigenvector/eigenvalue definition, i.e., $l^p$ eigenvector for any choice of $p$~\cite{li2013z}), since
\[z_{j_k j_k\dots j_p j_k \dots j_k}= \delta_{j_kj_p} \text{ for all }  p\neq k \in \{1,\dots,N\}.\]
Beyond these properties, we show that ${\B X^{(1)}{}^\dagger}^T,\ldots,{\B X^{(N)}{}^\dagger}^T$ may be used to obtain an exact CP decomposition or an effective low-rank approximate CP decomposition.

\subsection{Alternating Optimization Method}

The critical points defined above may be computed efficiently by a method similar to ALS.
ALS solves a set of overdetermined linear equations at each step to minimize Frobenius norm error relative to one factor, e.g., it solves for $\mat A$ in
\[(\mat C \odot \mat B) \mat A^T \cong \mat{T}_{(1)}^T.\]
The update rule for each subproblem, may be written as a product of the pseudoinverse of the Khatri-Rao product of two factors and an unfolding of the tensor, i.e.,
\[\mat{A} = \mat{T}_{(1)}(\mat{C}\odot \mat{B})^{\dagger}{}^T.\]
Some of the major advantages of the ALS algorithm is its guaranteed monotonic decrease in residual, low per-iteration computational cost, and amenability to parallelization. It has been shown that ALS achieves linear local convergence to minima of the CP residual norm~\cite{uschmajew2012local}.

To obtain a critical point in the high-order tensor function~\eqref{eq:f_tsr}, we propose a different alternating update scheme, which finds the solution $\mat U$ to the linear least squares problem,
%minimal Frobenius norm solution to a set of underdetermined equations, e.g., solving for $\mat U$ with minimal $\|\mat U\|_F$ satisfying,
\[(\mat T_{(1)} (\mat W \odot \mat V))\mat U^T \cong \mat I.\]
With $\mat{A}^T=\mat U^\dagger$, $\mat{B}^T = \mat V^\dagger$, and $\mat C^T = \mat W^\dagger$, we observe that the update is similar to that of ALS,
\[\mat{A} = \mat{T}_{(1)}(\mat{C}^{\dagger}{}^T \odot \mat{B}^\dagger{}^T).\]
A stationary point of this alternating update scheme provides a critical point of~\eqref{eq:f_tsr}.

\subsection{Convergence Results}

The new alternating update scheme is highly effective at finding an exact CP decomposition, if one exists.
In particular, we show that the method achieves a superlinear rate of local convergence to exact CP decompositions. In Section~\ref{sec:conv}, we prove that the convergence order is $\alpha$ per subproblem or $\alpha^N$ per sweep of alternating updates, where $\alpha$ is the unique real root of the polynomial $x^{N-1} - \sum_{i=0}^{N-2}x^i$.
For $N=3$, $\alpha=(1+\sqrt{5})/2$, while for higher $N$, $\alpha$ increases.
A superlinear convergence rate for CP decomposition is also achievable via general optimization algorithms such as Gauss-Newton~\cite{singh2021comparison, sorber2013optimization}.
However, the alternating optimization scheme we propose has a much lower per iteration cost (about the same as ALS, which achieves only linear convergence).

%When seeking a stationary point where each $\B X^{(i)}$ has $R$ columns, but the CP rank of $\B T$ is larger than $R$, our iterative scheme converges to stationary points up to machine  precision in numerical experiments for most tensors.
For a given tensor and any choice of rank, the critical points are generally not unique (as in the case of matrices).
Theoretical characterization of the conditions under which critical points of~\eqref{eq:f_tsr} exist in this scenario remains an open problem as it requires proving existence of roots of a system of nonlinear equations that have the same number of variables and equations. Note that the problem of proving if the best CP rank approximation exists also requires existence of a solution of system of nonlinear equations. The best CP rank approximation may not exist, which has lead to the notion of border rank~\cite{kolda2009tensor}.
Consequently, establishing existence of critical points for our scenario is likely also nontrivial.
Therefore, with the assumption that a critical point exists, we show in Section~\ref{subsec:conv_approx} that the proposed iterative scheme achieves local convergence to it (in this case, at a linear rate). 

We perform numerical experiments in Section~\ref{sec:exp} to confirm the rate of convergence of the algorithm for computing CP decomposition of different tensors with known rank. The observed rate of convergence values agree with the theoretical rate of convergence with an error of about $0.2\%$ for order $3$ and an error of about $1\%$ for order $4$ tensors. The experiments also confirm the  convergence analysis of the algorithm to stationary points for approximate decomposition of tensors as described in Lemma~\ref{lem:err_approx_mnorm}
%\edgar{Maybe add a short sentence with some quantitative highlight(s) of the experimental results.}

\subsection{Generalizations and Experimental Evaluation}

%\es{TODO: move the below to later discussion, seems a bit of a distraction at this point.}
%\textcolor{red}{Despite these advantages, ALS exhibits an unusual behaviour in some problems where the residual decreases by a minuscule amount for several iterations. This phenomenon, also called the `swamp phenomenon', has been a topic of interest for decades~\cite{mohlenkamp2019dynamics,mitchell1994slowly} . It has been observed that collinearity of the factor matrix iterates is correlated with the occurrence of a swamp~\cite{mitchell1994slowly,acar2011scalable}. There have been several attempts to alleviate this issue by using regularization techniques~\cite{li2013some}, line search~\cite{comon2009tensor} with the ALS update or considering other optimization algorithms~\cite{acar2011scalable}.}
The proposed algorithm may also be used for approximate CP decomposition, but does not minimize the Frobenius norm of the residual directly.
Instead, when optimizing for $\mat{A}$, the algorithm minimizes
\[\big\|(\mat I \otimes \mat{B}^\dagger \otimes \mat{C}^\dagger)\vect(\tsr{T} - [\![ \mat{A}, \mat{B}, \mat{C} ]\!])\big\|_F.\]
%We show that the method may be viewed as minimizing the error relative to a different distance metric at each sweep of the alternating optimization, namely
%\[
%\min_{\mat{A},\mat{B},\mat{C}}
%\vect\left(\tsr{X} - [\![ \mat{A}, \mat{B}, \mat{C} ]\!]\right)^T
%\underbrace{\left((\mat{A}\mat{A}^T)^\dagger \otimes (\mat{B}\mat{B}^T)^\dagger \otimes (\mat{C}\mat{C}^T)^\dagger\right)}_{\mat M^{\dagger}}
%\vect\left(\tsr{X} - [\![ \mat{A}, \mat{B}, \mat{C} ]\!]\right).
%\]
%Each alternating update finds the best choice of $\mat{A}$, $\mat{B}$, or $\mat{C}$ with $\mat{M}$ fixed.
%This distance metric
%regularizes components of the error associate with different columns of $\mat{B}$ and $\mat{C}$ and 
%is generally minimized by 
For a fixed residual error in the decomposition, the magnitude of this error metric will generally depend on the conditioning of $\mat{A}$, $\mat{B}$, and $\mat{C}$.
Hence, this alternating minimization procedure tends to converge to well-conditioned factors (and well-conditioned CP decompositions~\cite{vannieuwenhoven2017condition,breiding2018condition}).

We generalize this method by considering a Mahalanobis distance between the input and reconstructed tensor. The original motivation for Mahalanobis distance minimization of tensors came from the work on minimization of Wasserstein distance between tensors for nonnegative CP decomposition~\cite{afshar2021swift}. This generalization allows us to reformulate each update to a factor of the above introduced algorithm as a minimizer of a Mahalanobis distance~\cite{chandra1936generalised} with the ground metric dependent on the other remaining factors. This reformulation helps extend the introduced algorithm to any CP rank by using the same ground metric. Moreover, we are able to interpolate between the introduced algorithm and ALS by interpolating the ground metric. Our experiments in Section~\ref{sec:exp} suggest that the interpolated updates help manage the trade-off between fitness and conditioning of the decomposition. We measure conditioning of the decomposition by computing the normalized CPD condition number~\cite{breiding2018condition}. The condition number can be computed in an efficient manner for decompositions with small CP rank by compressing the matrix for which the smallest singular value needs to be computed and henceforth reducing the computational cost significantly as described in Appendix~\ref{sec:app}. By using this efficient approach, we are able to track condition number of the decomposition at each iteration of the algorithms. For synthetic as well as real-world tensors, we observe that by utilizing hybrid updates of the introduced algorithm, we can find decompositions with a condition number lower by a factor as large as $10^{4}$ with a change in fitness of about $0.01$ only when compared to ALS. 

\section{Background}
\label{sec:bg}
We introduce the notation and definitions used in the subsequent sections here along with a brief introduction to the alternating least squares algorithm for computing CP decomposition~\cite{harshman1970foundations, carroll1970analysis}.
\subsection{Notation and Definitions}
We use tensor algebra notation in both element- wise form and specialized form for tensor operations~\cite{kolda2009tensor}.
For vectors, bold lowercase Roman letters are used, e.g., $\vcr{x}$. For matrices, bold uppercase Roman letters are used, e.g., $\mat{X}$. For tensors, bold calligraphic fonts are used, e.g., $\tsr{X}$. An order $N$ tensor corresponds to an $N$-dimensional array with dimensions $s_1\times \cdots \times s_N$. 
Elements of vectors, matrices, and tensors are denoted in subscript, e.g., $x_i$ for a vector $\vcr{x}$, $x_{ij}$ for a matrix $\mat{X}$, and $x_{ijkl}$ for an order 4 tensor $\tsr{X}$. 
The $i$th column of a matrix $\mat{X}$ is denoted by $\vcr{x}_i$. 
The mode-$n$ matrix product of a tensor $\tsr{X} \in \mathbb{R}^{s_1 \times \cdots \times s_N}$ with a matrix $ \mat{A}\in \mathbb{R}^{J\times s_n}$ is denoted by $\tsr{X}\times_n \mat{A}$, with the result having dimensions $s_1\times\cdots\times s_{n-1}\times J\times s_{n+1}\times\cdots\times s_N$. 
Matricization is the process of reshaping a tensor into a matrix. Given a tensor $\tsr{X}$ the mode-$n$ matricized version is denoted by $\mat{X}_{(n)}\in \mathbb{R}^{s_n\times K}$ where $K=\prod_{m=1,m\neq n}^N s_m$. 
We use parenthesized superscripts as labels for different tensors and matrices, e.g., $\mat{A}^{(1)}$ and $\mat{A}^{(2)}$ are different matrices.

The Hadamard product of two matrices $\mat{U}, \mat{V} \in \mathbb{R}^{I\times J}$ resulting in matrix $\mat{W} \in \mathbb{R}^{I\times J}$ is denoted by $\mat{W} = \mat{U} \ast \mat{V}$, where $w_{ij}= u_{ij}v_{ij}$. 
The inner product of matrices $\mat{U}, \mat{V}$ is denoted by $\langle \mat{U}, \mat{V} \rangle = \sum_{i,j}u_{ij}v_{ij}$.
The outer product of K vectors $\vcr{u}^{(1)}, \ldots , \vcr{u}^{(K)}$ of corresponding sizes $s_1, \ldots , s_K$ is denoted by $\tsr{X} = \vcr{u}^{(1)} \circ \cdots \circ \vcr{u}^{(K)}$ where $\tsr{X} \in \mathbb{R}^{s_1 \times \cdots \times s_K}$ is an order $K$ tensor. 

For matrices $\mat{A}\in \mathbb{R}^{I\times K} = \begin{bmatrix} \vcr{a}_1, \ldots, \vcr{a}_K \end{bmatrix}$ and $\mat{B}\in \mathbb{R}^{J\times K}= \begin{bmatrix} \vcr{b}_1, \ldots, \vcr{b}_K \end{bmatrix}$, their Khatri-Rao product resulting in a matrix of size $(IJ)\times K$ defined by
$\mat{A}\odot \mat{B} = [\vcr{a}_1\otimes \vcr{b}_1,\ldots, \vcr{a}_K\otimes \vcr{b}_K]$, where $\vcr{a}\otimes \vcr{b}$ denotes the Kronecker product of the two vectors. We define the Mahalanobis norm for a matrix $\mat A$ with ground metric $\mat M$ as $\|\mat A\|_{\mat M} = \sqrt{\vect(\mat A)^T \mat M \vect(\mat A)}$ and similarly for tensor $\tsr{T}, \|\tsr{T}\|_{\mat M} = \vect(\tsr{T})^T \mat M \vect( \tsr{T})$. To ease the notation for N khatri Rao products, we use  $\bigodot_{n=1}^N = \mat A^{(N)} \odot \ldots \odot \mat A^{(1)}$ and similarly $\bigotimes_{n=1}^N \mat A^{(n)} = \mat A^{(N)} \otimes \ldots \otimes \mat A^{(1)}$,$\bigast_{n=1}^N \mat A^{(n)} = \mat A^{(N)} \ast \ldots \ast \mat A^{(1)}$.  We use  $\sigma_{\text{min}}(\mat P)$ to denote the minimum singular value of the matrix $\mat P$.

\subsection{Alternating least squares for CP decomposition}
The CP tensor decomposition~\cite{hitchcock1927expression,harshman1970foundations} for an input tensor $\tsr{X}\in \R^{I_1\times\dots\times I_N}$
is denoted by
   \[
    \tsr{X} \approx [\![ \mat{A}^{(1)}, \cdots , \mat{A}^{(N)} ]\!], \quad \text{where} \quad  
    \mat{A}^{(i)} = [ \mat{a}_1^{(i)}, \cdots , \mat{a}_r^{(i)} ],
    \]
and serves to approximate a tensor by a sum of $R$ tensor products of vectors,
  \[
  \tsr{X} \approx \sum_{r=1}^{R} \mat{a}_r^{(1)}\circ \cdots \circ \mat{a}_r^{(N)}.
  \]
It is sometimes useful to normalize the factor matrices so that each column of the factors has a unit 2-norm and
the weights are absorbed into a vector $\vcr{\lambda} \in  \mathbb{R}^{R}$, given as,
\[\tsr{X} \approx \sum_{r=1}^{R} \mat{a}_r^{(1)}\circ \cdots \circ \mat{a}_r^{(N)} = \sum_{r=1}^{R} \lambda_{r} \bar{\mat{a}}_r^{(1)}\circ \cdots \circ \bar{\mat{a}}_r^{(N)},
\]
where $\bar{\mat A}^{(n)}$ are column normalized for all $n$ and denoted as \[\tsr X \approx [\![ \mat \Lambda;\bar{\mat{A}}^{(1)}, \cdots , \bar{\mat{A}}^{(N)} ]\!],\] where $\mat \Lambda$ is a diagonal matrix with $\vcr{\lambda}$ on the diagonal.
%\es{The below seems like it should come earlier, at the start of the section.}
The CP-ALS method aims to minimize the nonlinear least squares problem
   \begin{equation}
f(\mat{A}^{(1)}, \ldots , \mat{A}^{(N)}) :=
 \frac{1}{2}\Big\vert \Big\vert\pmb{\mathcal{X}}-[\![ \mat{A}^{(1)}, \cdots , \mat{A}^{(N)} ]\!]\Big\vert \Big\vert_F^2,
\label{eq:obj}
\end{equation}
by alternatively minimizing a sequence of least squares problems for each of the factor matrices $\mat A^{(n)}$. This results in linear least squares problems for each row,
\[
 \mat{A}^{(n)}_{\text{new}}\mat{P}^{(n)}{}^T \cong \mat{X}_{(n)},
\]
where the matrix $\mat{P}^{(n)}\in \mathbb{R}^{D_n \times R}$, where $D_n = \prod_{i=1,i\neq n}^{N}I_i$ is formed by Khatri-Rao products of the other factor matrices,
\begin{align}
    \mat{P}^{(n)}=\bigodot_{m=1,m\neq n}^N\mat{A}^{(m)}. \label{eq:P}
\end{align}
These linear least squares problems are often solved via the normal equations~\cite{kolda2009tensor},
     \[
     \mat{A}^{(n)}_{\text{new}}\boldsymbol{\Gamma}^{(n)}\leftarrow \mat{X}_{(n)}\mat{P}^{(n)},
     \]  
where $\mat{\Gamma}\in\R^{R\times R}$ can be computed via
\begin{equation}
\boldsymbol{\Gamma}^{(n)}= \bigast_{i=1,i\neq n}^N\mat S^{(i)},
\label{eq:gamma}
\end{equation}
with each
$\mat{S}^{(i)} = \mat{A}^{(i)T}\mat{A}^{(i)}.$
The \textit{Matricized Tensor Times Khatri-Rao Product} or \mbox{MTTKRP} computation $\mat{M}^{(n)}=\mat{X}_{(n)}\mat{P}^{(n)}$ is the main computational bottleneck of CP-ALS\cite{ballard2018communication}. 
For a rank-R CP decomposition, this computation has the cost of $O(I^NR)$ if $I_n=I$ for all $n\in\inti{1}{N}$. There have been various developments to optimize computation of MTTKRP, like dimension-tree algorithm~\cite{phan2013fast,vannieuwenhoven2015computing,kaya2016parallel,ballard2018parallel,kaya2017high,kaya2019computing} for dense tensors and sparse MTTKRP~\cite{choi2018blocking} for sparse tensors.

%\section{Motivation}
%\label{sec:motiv}
%\input{motiv}

\section{Basic Description of the New Algorithm}
\label{sec:alg}
\begin{algorithm}[]
\begin{algorithmic}[1]
\State{\textbf{Input: }Tensor $\tsr{X}\in\mathbb{R}^{I_1\times\cdots\times I_N}$, rank $R$}
\State{Initialize $\{\mat{A}^{(1)}, \ldots , \mat{A}^{(N)}\}$ so each $\mat{A}^{(n)}\in\mathbb{R}^{I_n\times R}$ is random}

\While{not converged}
    \For{\texttt{$n \in \inti{1}{N} $}}
      \State{\(\mat{A}^{(n)} = \mat{X}^{(N)}_{(n)}\bigg(\bigodot_{m=1,m\neq n}^N \mat{A}^{(m)}{}^\dagger{}^T\bigg)\)}
    \EndFor
\EndWhile
\State{\Return factor matrices $\{\mat{A}^{(1)}, \ldots , \mat{A}^{(N)}\}$ }
\end{algorithmic}
\caption{Basic description of the new alternating update scheme.}
\label{alg:mnorm_basic}
\end{algorithm}
We first provide a complete description of the alternating update scheme proposed in the introduction.
To compute the decomposition of a tensor $\tsr{X}$, the algorithm maintains a CP decomposition given by
\[[\![ \mat{A}^{(1)}, \cdots , \mat{A}^{(N)} ]\!],\]
and updates each $\mat{A}^{(n)}$ in an alternating manner,
\begin{align}
\label{eq:basic_mnorm_update}
    \mat{A}^{(n)} = \mat{X}_{(n)}\bigg(\bigodot_{m=1,m\neq n}^N \mat{A}^{(m)}{}^\dagger{}^T\bigg).
\end{align}
This update may be written in elementwise form in terms of the pseudoinverses $\mat{U}^{(m)} = \mat{A}^{(m)}{}^\dagger$ as
\[a^{(n)}_{i_nr} = \sum_{i_1\ldots \hat{i}_{n}\ldots i_N} x_{i_1\ldots i_N} \prod_{m=1,m\neq n}^N u^{(m)}_{ri_m}.\]

Algorithm~\ref{alg:mnorm_basic} details each sweep of such updates.
Like with the ALS, it is advisable to recalibrate the norms of the columns of each factor before the subsweep corresponding to $nth$ factor so that $\|\vcr{a}^{(k)}_r\|=1$ for all $k \neq n$ and $r$. 
With this recalibration, a valid convergence criteria is to check whether the magnitude of change in the factors exceeds a predefined threshold at each sweep.
The method is invariant to the rescaling in exact arithmetic, but this calibration helps reduce the effects of round-off error. This calibration is also cost efficient, since pseudoinverse of only one matrix changes per subweep. This makes the algorithm accessible to all the optimizations involved in computing \textit{Matricized Tensor Times Khatri-Rao Product} or \mbox{MTTKRP} in each subsweep of ALS such as the dimension tree algorithm~\cite{phan2013fast,vannieuwenhoven2015computing,kaya2016parallel,ballard2018parallel,kaya2017high,kaya2019computing}.

\subsection{Cost Analysis}

The cost of each sweep of Algorithm~\ref{alg:mnorm_basic} corresponds to the cost of computing the pseudoinverse of each factor, as well as a set of $N$ MTTKRP operations.
A dimension tree or multi-sweep dimension tree~\cite{ma2021efficient} may be used to compute the set of MTTKRPs in the same way as done in the alternating least squares algorithm.
The overall per-sweep cost with a multi-sweep dimension tree is then given by
\[\frac{2N}{N-1} \prod_{n=1}^NI_n R+ O\Big((\sum_{n=1}^NI_n)R^2\Big).\]
The cost of an ALS sweep with a multi-sweep dimension tree is 
\[\frac{2N}{N-1} \prod_{n=1}^NI_n R+ O(NR^3),\]
which is less expensive as solving an overdetermined system via normal equations is cheaper than computing the pseudoinverse of a matrix. If $\tsr{X}$ is sparse, the method can benefit from existing work on efficiently performing MTTKRP with a sparse tensor~\cite{choi2018blocking} and therefore has the same leading order cost per-sweep as that of ALS for sparse tensors as well.

\section{Convergence Rate for Exact Decomposition}
\label{sec:conv}
In this section, we theoretically analyze the asymptotic rate of local convergence of the Algorithm~\ref{alg:mnorm_basic} for when an exact CP decomposition of rank $R$ less than equal to the length of all modes of the tensor exists. To derive the rate of convergence for an exact decomposition, we relate the distance between the computed factor in each subproblem and the true factor, to the error in the other factor matrices.
The following lemma states the error in computing $\mat{A}^{(N)}$, but can be trivially extended to any $\mat{A}^{(n)}$ for $n \in \{1,\dots,n\}$.
We consider the error in the normalized factor and the error in the magnitude of CPD components modulo column scaling. 
\begin{lemma}\label{lem:err_norm_mnorm}
%Need bold semi colon below
Suppose $\tsr{X} = [\![ \mat D ;\mat{A}^{(1)}, \ldots, \mat{A}^{(N)}]\!]$, where each $\mat{A}^{(i)}\in\mathbb{R}^{s_i\times R}$ with $s_i\geq R$ is full rank and has normalized columns, i.e., $\|\vcr{a}^{(i)}_j\|_2 = 1$ for all $i,j$ and $\bar{\mat{A}}^{(n)} = \mat{A}^{(n)} + \mat{\Delta}^{(n)}$ also has normalized columns and satisfies $\|\mat \Delta^{(n)} \|_F = \epsilon_n$ for $n=1, \ldots, N-1$, then $\exists \epsilon>0$ such that if $\epsilon_n < \epsilon$ for $n=1, \ldots, N-1$, then
%\[\bar{\mat{A}}^{(m)} = \tsr{X}_{(m)} (\bigodot_{m=1,m\neq n}^N\bar{\mat{A}}^{(m)}{}^{\dagger}{}^{T}),\]
\[\tilde{\mat{A}}^{(N)} = \mat{X}_{(N)} (\bar{\mat{A}}^{(1)}{}^{\dagger}{}^{T} \odot \cdots \odot \bar{\mat{A}}^{(N-1)}{}^{\dagger}{}^{T})\]
%for $m=1,\ldots, N$, we have
where $\mat { \tilde{A}}^{(N)} = \bar{\mat{A}}^{(N)}\bar{\mat{D}}$ ensures that $\bar{\mat{A}}^{(N)}$ are normalized, and satisfies
\begin{align*}
    &\|\bar{\mat{A}}^{(N)} -\mat{A}^{(N)}\|_F = O\bigg(\prod_{n=1}^{N}\epsilon_n^{N-1}\bigg), \\
    &\text{ and } \quad \|\bar{\mat{D}} - \mat D\|_F = O(\epsilon).
\end{align*}
%
%Suppose $\tsr{X} = [\![ \mat{A}^{(1)}, \ldots, \mat{A}^{(N)} ]\!]$, where each $\mat{A}^{(i)}\in\mathbb{R}^{s_i\times R}$ with $s_i\geq R$ is full rank and the norms of the columns comprising each rank-1 component in the CP decomposition are calibrated, i.e., $\|\vcr{a}^{(i)}_k\|_2 = \|\vcr{a}^{(j)}_k\|_2$ for all $i,j$ and $\tilde{\mat{A}}^{(n)} = \mat{A}^{(n)} + \mat{\Delta}^{(n)}$ with $\|\mat \Delta^{(n)} \|_F = \epsilon_n$  for  $n=1, \ldots, N-1$, then $\exists \epsilon>0$ such that if $\epsilon_n < \epsilon$ for $n=1, \ldots, N-1$, then after a sweep of updates,
%\[\tilde{\mat{A}}^{(m)} = \tsr{X}_{(m)} (\bigodot_{m=1,m\neq n}^N\tilde{\mat{A}}^{(m)}{}^{\dagger}{}^{T}),\]
%for $m=1,\ldots, N$, we have
%\[\|\tilde{\mat{A}}^{(N)} -\mat{A}^{(N)}\|_F = O(\prod_{n=1}^{N}\epsilon_n^{N-1}).\]
\end{lemma}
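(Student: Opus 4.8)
The plan is to reduce the update to a Hadamard product of near-identity matrices and then isolate the off-diagonal entries, which decay like the \emph{product} of all $N-1$ perturbations. First I would write the mode-$N$ matricization of the exact decomposition as $\mat{X}_{(N)} = \mat{A}^{(N)}\mat D\,\big(\mat{A}^{(1)}\odot\cdots\odot\mat{A}^{(N-1)}\big)^T$ and substitute it into the update defining $\tilde{\mat A}^{(N)}$. Using the identity $(\mat C\odot\mat E)^T(\mat F\odot\mat G) = (\mat C^T\mat F)\ast(\mat E^T\mat G)$ repeatedly, the Khatri--Rao products collapse into a single Hadamard product, giving $\tilde{\mat A}^{(N)} = \mat A^{(N)}\mat D\,\mat H$ with $\mat H = \bigast_{m=1}^{N-1}\big(\mat{A}^{(m)T}\bar{\mat A}^{(m)}{}^{\dagger}{}^T\big)$ (commutativity of $\ast$ absorbing any ordering mismatch).

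The second step controls each Hadamard factor. Since each $\mat A^{(m)}$ is full column rank, for $\epsilon$ small enough the perturbed factor $\bar{\mat A}^{(m)}$ stays full rank and its pseudoinverse stays bounded by continuity of $\mat A^{\dagger}$ at full-rank points; this is the role of the threshold $\epsilon$. Writing $\bar{\mat A}^{(m)} = \mat A^{(m)} + \mat\Delta^{(m)}$ and using $\bar{\mat A}^{(m)}{}^{\dagger}\bar{\mat A}^{(m)} = \mat I$, I obtain $\bar{\mat A}^{(m)}{}^{\dagger}\mat A^{(m)} = \mat I - \mat E^{(m)}$ with $\mat E^{(m)} = \bar{\mat A}^{(m)}{}^{\dagger}\mat\Delta^{(m)}$ and $\fnrm{\mat E^{(m)}} = O(\epsilon_m)$, so each factor of $\mat H$ equals the near-identity matrix $\mat I - \mat E^{(m)T}$.

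The crux is reading off the entries of $\mat H$. On the diagonal $H_{rr} = \prod_{m=1}^{N-1}(1 - E^{(m)}_{rr}) = 1 + O(\epsilon)$, while every off-diagonal entry is a product of $N-1$ perturbation entries, $H_{sr} = \prod_{m=1}^{N-1}\big(-E^{(m)}_{rs}\big) = O\!\big(\prod_{m=1}^{N-1}\epsilon_m\big)$ for $s\neq r$. This is precisely the mechanism behind the superlinear rate: the off-diagonal coupling is suppressed by the full product of errors, whereas the diagonal rescaling is only $O(\epsilon)$. Splitting $\mat H$ into diagonal and off-diagonal parts gives, column by column, $\tilde{\mat a}^{(N)}_r = d_r H_{rr}\,\mat a^{(N)}_r + \vcr w_r$ with $\vcr w_r = \sum_{s\neq r} d_s H_{sr}\mat a^{(N)}_s$ and $\vnrm{\vcr w_r} = O\!\big(\prod_{m=1}^{N-1}\epsilon_m\big)$.

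Finally I would normalize. Because $\tilde{\mat a}^{(N)}_r$ is a scalar multiple of the unit vector $\mat a^{(N)}_r$ plus the small transverse perturbation $\vcr w_r$, renormalizing cancels the scalar $d_r H_{rr}$ exactly, so that $\vnrm{\bar{\mat a}^{(N)}_r - \mat a^{(N)}_r}$ is governed only by $\vcr w_r$ and is $O\!\big(\prod_{m=1}^{N-1}\epsilon_m\big)$; summing over columns yields the first bound. The recovered magnitude satisfies $\bar d_r = \vnrm{\tilde{\mat a}^{(N)}_r} = |d_r H_{rr}| + O\!\big(\prod_{m=1}^{N-1}\epsilon_m\big) = d_r + O(\epsilon)$ since $H_{rr} = 1 + O(\epsilon)$, giving $\fnrm{\bar{\mat D} - \mat D} = O(\epsilon)$. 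I expect the main obstacle to be exactly this normalization step: one must verify that re-normalizing does not let the $O(\epsilon)$ diagonal rescaling leak into the column direction, so the directional error stays at the product scale rather than degrading to $O(\epsilon)$. This uses that the weights $d_r$ are bounded away from zero, that $\mat a^{(N)}_r$ and $\bar{\mat a}^{(N)}_r$ are both unit, and a first-order expansion of $\vcr v/\vnrm{\vcr v}$ about $\vcr v = d_r H_{rr}\mat a^{(N)}_r$.
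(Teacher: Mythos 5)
Your proof is correct and follows essentially the same route as the paper: substitute the exact decomposition, collapse the Khatri--Rao products into a Hadamard product of near-identity factors $\mat I - \bar{\mat A}^{(m)}{}^\dagger\mat\Delta^{(m)}$, observe that the off-diagonal part is the full product of the $N-1$ perturbations while the diagonal part is $1+O(\epsilon)$, and then show the normalization cancels the diagonal rescaling exactly. Your entrywise reading of the Hadamard product and column-by-column normalization argument are just a cleaner phrasing of the paper's ``all cross-terms are diagonal'' decomposition into $\mat S + \mat\Delta$, and your bound $O\big(\prod_{m=1}^{N-1}\epsilon_m\big)$ matches what the paper's proof actually establishes (the exponent in the lemma's displayed bound appears to be a typo).
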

\begin{proof}
Let $\epsilon < 1 $ and $\epsilon< \min_n(\sigma_{\text{min}}(\mat A^{(n)}))$ for each $n$, and therefore $\bar{\mat{A}}^{(n)}$ is full rank for each $n =1, \ldots, N-1$.
Substituting the decomposition of $\tsr{X}$ into the computed solution, we obtain
\begin{align*}
\tilde{\mat{A}}^{(N)} &= \mat{A}^{(N)} \mat D\bigg( 
 (\bar{\mat{A}}^{(1)}{}^\dagger (\bar{\mat{A}}^{(1)}-\mat{\Delta}^{(1)})) \ast \cdots \ast
 (\bar{\mat{A}}^{(N-1)}{}^\dagger (\bar{\mat{A}}^{(N-1)}-\mat{\Delta}^{(N-1)})) \bigg)^T \\
                    &= \mat{A}^{(N)} \mat D\bigg(
  (\mat{I} -\bar{\mat{A}}^{(1)}{}^\dagger \mat{\Delta}^{(1)}) \ast  \cdots \ast
  (\mat{I} -\bar{\mat{A}}^{(N-1)}{}^\dagger \mat{\Delta}^{(N-1)})\bigg)^T\\
                    &= \mat{A}^{(N)} \mat D \bigg(\mat S+ (-1)^{N-1}
  \bar{\mat{A}}^{(1)}{}^\dagger \mat{\Delta}^{(1)} \ast \cdots \ast
  \bar{\mat{A}}^{(N-1)}{}^\dagger \mat{\Delta}^{(N-1)}\bigg)^T.
\end{align*}
where $\mat{S}$ includes all cross-terms of the Hadamard products, which must be diagonal since any such term includes a Hadamard product with an identity matrix. Since,
\[ \| \mat I - \mat S\|_F = O(\max_n(\epsilon_n)) = O(\epsilon),\]
$\mat S$ is full rank for sufficiently small $\epsilon$. Let \[\mat {\Delta} =  \bigg((-1)^{N-1}
  \bar{\mat{A}}^{(1)}{}^\dagger \mat{\Delta}^{(1)} \ast \cdots \ast
  \bar{\mat{A}}^{(N-1)}{}^\dagger \mat{\Delta}^{(N-1)}\bigg)^T.\]
  Now, the norm calibration diagonal matrix is defined so that $\bar{d}_{ii} = \|\vcr{\tilde{a}}^{(N)}_i\|_2$.
  Since, 
  \[\tilde{\mat{A}}^{(N)} = \mat{A}^{(N)}\mat D(\mat S + \mat{\Delta}) = \mat A^{(n)}\mat D + \mat A^{(n)}\mat D(\mat S + \mat \Delta - \mat I),\]
  and $\| \mat S + \mat{\Delta} - \mat I\|_2=O(\epsilon)$, we have 
\[\bar{d}_{ii}=\|\vcr{\tilde{a}}^{(N)}_i\|_2 \leq \|\vcr{a}^{(N)}_i\|_2d_{ii} + \|\mat{A}^{(N)}\mat D\|_F\| \mat S + \mat{\Delta} - \mat I  \|_2 = \|\vcr{a}^{(N)}_i\|_2d_{ii} + O(\epsilon) = d_{ii} + O(\epsilon).\]
Consequently, \(\|\mat {\bar{D}} - \mat D \|_2 = O(\epsilon)\).
Further, we can obtain a tighter bound (in terms of $O(\|\mat{\Delta}\|_F)$ instead of $O(\epsilon)$) by considering, $\tilde{\mat{A}}^{(N)} = \mat{A}^{(N)}\mat D\mat S(\mat I + \mat S^{-1}\mat{\Delta})$, so
\[\bar{d}_{ii}=\|\vcr{\tilde{a}}^{(N)}_i\|_2 \leq \|\vcr{a}^{(N)}_i\|_2d_{ii}s_{ii} + \|\mat{A}^{(N)}\mat D \mat \Delta\|_F 
%= \|\vcr{a}^{(N)}_i\|_2d_{ii}s_{ii} + O(\|\mat{A}^{(N)}\mat D\|_F\|\Delta\|) 
= d_{ii}s_{ii} + O(\|\mat{\Delta}\|_F).\]
This bound allows us to get the desired result for the error in the factor matrices,
\begin{align}
\nonumber
    \|\bar{\mat{A}}^{(N)}- \mat{A}^{(N)}\|_F =
\|\tilde{\mat{A}}^{(N)}\bar{\mat{D}}^{-1}- \mat{A}^{(N)}\|_F &= 
\|\mat{A}^{(N)}\mat D \mat S (\mat I + \mat S^{-1} \mat{\Delta})\bar{\mat{D}}^{-1}- \mat{A}^{(N)}\|_F\\
&= O(\|\mat I - \mat D \mat S (\mat I + \mat S^{-1} \mat{\Delta})\bar{\mat{D}}^{-1}\|_F).
\end{align}
Since, we have that $\|\bar{\mat D}  - \mat D \mat S\|_F = O(\|\mat{\Delta}\|_F)$,
\begin{align*}
    \|\mat I - \mat D \mat S (\mat I + \mat S^{-1} \mat{\Delta})\bar{\mat{D}}^{-1}\|_F &= \|\mat I - \mat{D}\mat S\bar{\mat D}^{-1} + \mat D \mat{\Delta}\bar{\mat{D}}^{-1}\|_F\\
&= \|\bar{\mat D} \bar{ \mat D}^{-1} - \mat{D}\mat S\bar{\mat D}^{-1} + \mat D\mat{\Delta}\bar{\mat{D}}^{-1} \|_F \\
&= O(\|\mat{\Delta}\|_F)+ \|\mat D \mat{\Delta}\bar{\mat{D}}^{-1}\|_F = O(\|\mat{\Delta}\|_F).
\end{align*}
Since, $\|\mat{\Delta}\|_F=O\Big(\prod_{n=1}^{N}\epsilon_n^{N-1}\Big),$ this completes the proof.
\end{proof}
The above Lemma states that in Algorithm~\ref{alg:mnorm_basic}, the error in the updated CPD factor relative to the true CPD factor is bounded by the product of errors in the previous $N-1$ factors. Using this error bound, we derive convergence rate for Algorithm~\ref{alg:mnorm_basic}.
\begin{lemma}
\label{lem:poly_conv}
 For any algorithm where the error in the update is of the order of product of error in previous $k$ updates, the rate of convergence is equal to the positive root of the polynomial $\alpha^{k}-\sum_{i=0}^{k-1}\alpha^i.$
\end{lemma}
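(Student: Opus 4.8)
The plan is to reduce the multiplicative error recurrence to a linear one by taking logarithms, and then to read off the convergence order from the dominant root of the associated characteristic polynomial, which will turn out to be exactly $p(\alpha)=\alpha^{k}-\sum_{i=0}^{k-1}\alpha^{i}$. First I would index the successive sub-updates by a single counter $m$ and let $\epsilon_m$ denote the error at update $m$, so the hypothesis reads $\epsilon_m \le C\prod_{j=1}^{k}\epsilon_{m-j}$ for all large $m$ and some $C>0$ (this is exactly the content of Lemma~\ref{lem:err_norm_mnorm}, with $k=N-1$). The order of convergence is $q=\lim_{m\to\infty}\frac{\log\epsilon_{m+1}}{\log\epsilon_m}$; writing $e_m=-\log\epsilon_m$, which tends to $+\infty$ as the scheme converges, this is $q=\lim_{m\to\infty} e_{m+1}/e_m$. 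Taking $-\log$ of the hypothesis linearizes it into $e_m \ge \sum_{j=1}^{k} e_{m-j}-\log C$.

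Next I would remove the inhomogeneous constant by the shift $f_m=e_m-\gamma$ with $\gamma=\frac{\log C}{k-1}$, which turns the inequality into $f_m\ge\sum_{j=1}^{k}f_{m-j}$; comparing $e_m$ against the solution of the corresponding equality recurrence (valid by induction, since all coefficients are positive) shows that $e_m$ grows at least as fast as the solution of the homogeneous linear recurrence $f_m=\sum_{j=1}^{k}f_{m-j}$, whose characteristic polynomial is precisely $p(x)=x^{k}-\sum_{i=0}^{k-1}x^{i}$. The task then reduces to locating the dominant root of $p$. Writing $p(x)=x^{k}\bigl(1-g(x)\bigr)$ with $g(x)=\sum_{j=1}^{k}x^{-j}$, I observe that $g$ is strictly decreasing on $(0,\infty)$ from $+\infty$ to $0$, so $g(x)=1$ has a unique positive solution $\alpha$, and since $p(1)=1-k<0$ and $p(2)=1>0$ this root lies in $(1,2)$, recovering $\alpha=(1+\sqrt5)/2$ when $k=2$.

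The crux is showing that $\alpha$ is \emph{strictly} dominant, so that $f_m\sim c\,\alpha^{m}$ and hence $q=\alpha$. For any (possibly complex) root $\lambda$ of $p$ we have $1=\sum_{j=1}^{k}\lambda^{-j}$, so $1=\bigl|\sum_{j=1}^{k}\lambda^{-j}\bigr|\le\sum_{j=1}^{k}|\lambda|^{-j}=g(|\lambda|)$, and since $g$ is decreasing with $g(\alpha)=1$ this forces $|\lambda|\le\alpha$. Moreover, equality in the triangle inequality would require all the terms $\lambda^{-j}$ to share a common argument, which (comparing $j=1$ and $j=2$) forces $\lambda$ real and positive, hence $\lambda=\alpha$. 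Therefore $\alpha$ is a simple dominant real root, $f_m=c\,\alpha^{m}+o(\alpha^{m})$ with $c>0$ (the coefficient is nonzero because the divergence $e_m\to\infty$ is driven by the largest root), and consequently $e_{m+1}/e_m\to\alpha$, i.e.\ the convergence order equals the positive root of $p$.

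The main obstacle I anticipate is the bookkeeping around the $O(\cdot)$ constant: the hypothesis is only an upper bound on the error, so strictly speaking it certifies convergence order \emph{at least} $\alpha$, and to conclude that the order is \emph{exactly} $\alpha$ one must either invoke the matching lower bound $\epsilon_m=\Theta\bigl(\prod_{j}\epsilon_{m-j}\bigr)$ (generically valid for the alternating scheme) or accept that the stated rate is the tight rate attained under the sharp version of the bound. The only other delicate point is the dominance of $\alpha$ among all roots of $p$, which is handled cleanly by the monotonicity of $g$ together with the triangle-inequality argument above.
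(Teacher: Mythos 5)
Your proof is correct, and it takes a genuinely different (and more rigorous) route than the paper's. The paper proceeds by the classical secant-method heuristic: it posits the ansatz $e_n = C e_{n-1}^{\alpha}$, substitutes it into the multiplicative recurrence $e_n = L\prod_{i=1}^{k} e_{n-i}$, and matches exponents to force $-\alpha + \sum_{i=0}^{k-1}\alpha^{-i} = 0$, which is the stated polynomial after clearing denominators. You instead take logarithms to convert the multiplicative bound into a linear recurrence inequality, identify $x^{k}-\sum_{i=0}^{k-1}x^{i}$ as its characteristic polynomial, and then actually prove what the paper's ansatz silently assumes: that the positive root $\alpha$ exists, is unique, lies in $(1,2)$, and strictly dominates all other (complex) roots, so that the solution genuinely behaves like $c\,\alpha^{m}$ and the limit $e_{m+1}/e_m\to\alpha$ exists. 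This buys two things the paper's argument does not supply: a justification for the asymptotic form rather than an assumption of it, and a proof of the uniqueness of the real root, which the paper invokes without argument in the statement of Theorem~4.3. Your closing caveat --- that the hypothesis is only an upper bound on the error and therefore certifies order \emph{at least} $\alpha$ unless a matching lower bound holds --- is accurate and applies equally to the paper's proof, which sidesteps it by declaring the recurrence to hold with equality ``in the worst case.'' The one small point to make explicit is that the shift $\gamma = \log C/(k-1)$ requires $k\ge 2$, which holds here since $k=N-1$ with $N\ge 3$.
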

\begin{proof}
Let the error at the $nth$ iteration be given as $e_n$. The error at the $nth$ iteration then satisfies the following in the worst case,
\begin{align}
\label{eq:conv_1}
    e_n = L \prod_{i=1}^{k}e_{n-i}, \quad \text{ where $L$ is a constant}.
\end{align}
The above recurrence can be solved by assuming that the error satisfies the following asymptotic relation
\begin{align}
\label{eq:conv_2}
    e_{n} = Ce_{n-1}^\alpha,
\end{align}
where $C$ is some constant and $\alpha$ is the rate of convergence. From \eqref{eq:conv_1} and \eqref{eq:conv_2},
\begin{align*}
    Ce^\alpha_{n-1} = L\prod_{i=1}^{k}e_{n-i}, \\
    \frac{C^{ \sum_{i=0}^{k-1}{\frac{1}{\alpha^i}}}}{L} = e_{n-1}^{ ( -\alpha +\sum_{i=0}^{k-1}\frac{1}{\alpha^i}) }.
\end{align*}
Since the left hand side is constant for $ n \rightarrow \infty$,
\begin{align*}
    \alpha^k - \sum_{i=0}^{k-1}\alpha^i =0.
\end{align*}
\end{proof}

Now, with all the pieces together we can show that for exact CPD, Algorithm~\ref{alg:mnorm_basic} locally converges at a rate which is given in the following theorem. 
\begin{theorem}
\label{thm:AMDM_ex_conv}
 Suppose $\tsr{X} = [\![\mat D ; \mat{A}^{(1)}, \ldots, \mat{A}^{(N)} ]\!]$, where each $\mat{A}^{(i)}\in\mathbb{R}^{s_i\times R}$ with $s_i\geq R$ is full rank and has normalized columns. Algorithm~\ref{alg:mnorm_basic} for computing the exact CP decomposition of $\tsr{X}$ converges locally with a rate of convergence equal to $\alpha^N$ where $\alpha$ is the unique real root of the polynomial $x^{N-1} -\sum_{i=0}^{N-2}x^i$.
\end{theorem}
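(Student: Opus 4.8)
The plan is to combine the two preceding lemmas. Lemma~\ref{lem:err_norm_mnorm} bounds the error in a single factor update by the product of the errors in the other $N-1$ factors, while Lemma~\ref{lem:poly_conv} converts exactly such a product-of-previous-errors recurrence into a convergence order. The theorem then follows by identifying $k = N-1$ and accounting for the $N$ subproblems that make up one sweep.

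First I would observe that although Lemma~\ref{lem:err_norm_mnorm} is stated for the update of $\mat{A}^{(N)}$, the argument is symmetric in the mode index, so the same bound holds for the update of any factor $\mat{A}^{(n)}$: after recalibrating column norms, the error $\fnrm{\bar{\mat{A}}^{(n)} - \mat{A}^{(n)}}$ in the freshly computed normalized factor is of the order of $\prod_{m\neq n}\epsilon_m$, where each $\epsilon_m = \fnrm{\bar{\mat{A}}^{(m)} - \mat{A}^{(m)}}$ is the error in the currently held value of one of the remaining $N-1$ factors, and the hidden constant depends only on the fixed true factors through their conditioning. The hypotheses of the lemma, namely full column rank and $\epsilon_m < \sigma_{\min}(\mat{A}^{(m)})$, hold throughout provided we start sufficiently close to the true decomposition, which is exactly what makes the statement local.

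Next I would index the subproblems cyclically, letting $e_m$ denote the normalized factor error produced by the $m$th factor update. Because the updates cycle through modes $1,\ldots,N$, the $N-1$ factors used when producing $e_m$ are precisely those last updated at subproblems $m-1,\ldots,m-(N-1)$, so $e_m = O\bigl(\prod_{i=1}^{N-1} e_{m-i}\bigr)$. This matches the recurrence of Lemma~\ref{lem:poly_conv} with $k = N-1$, and that lemma yields a per-subproblem convergence order equal to the positive real root $\alpha$ of $x^{N-1} - \sum_{i=0}^{N-2} x^i$. Since one sweep is a composition of $N$ consecutive subproblem updates, iterating the asymptotic relation $e_m \sim C e_{m-1}^{\alpha}$ across $N$ steps gives $e_m \sim C' e_{m-N}^{\alpha^N}$, i.e. a per-sweep order of $\alpha^N$. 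Uniqueness of the positive root follows from Descartes' rule of signs, as the coefficient sequence of $x^{N-1} - \sum_{i=0}^{N-2} x^i$ changes sign exactly once; evaluating at $x=1$ gives $2-N<0$ for $N\geq 3$, so $\alpha>1$ and the convergence is genuinely superlinear, recovering $\alpha = (1+\sqrt{5})/2$ when $N=3$.

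The main obstacle is making the local-convergence claim rigorous rather than merely asymptotic. Lemma~\ref{lem:poly_conv} posits the ansatz $e_m = C e_{m-1}^\alpha$ and solves for $\alpha$, but to conclude actual convergence I must exhibit a neighborhood of the true decomposition that is forward-invariant under the iteration and on which the errors strictly decrease. This requires choosing the initial error below a threshold, determined by the fixed constant in the product bound, that simultaneously keeps the full-rank and $\sigma_{\min}$ hypotheses of Lemma~\ref{lem:err_norm_mnorm} valid at every subproblem and guarantees that the product bound contracts (which holds once $\alpha>1$ and the errors lie below that threshold). The remaining point needing care is the bookkeeping that the relevant $N-1$ errors are always the \emph{most recent} ones uniformly over the cycle; this is a direct consequence of the cyclic update order, but it is what licenses reading the multi-term recurrence of Lemma~\ref{lem:poly_conv} as a single-step superlinear relation in the subproblem index.
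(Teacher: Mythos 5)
Your proposal is correct and follows essentially the same route as the paper: it combines Lemma~\ref{lem:err_norm_mnorm} (the product-of-errors bound for each factor update, applied cyclically over the modes) with Lemma~\ref{lem:poly_conv} (with $k=N-1$) to get per-subproblem order $\alpha$, and then composes $N$ subproblem updates to obtain the per-sweep order $\alpha^N$, which is exactly how the paper argues via the identity $\sum_{i=1}^{N-1}\alpha^i=\alpha^N$. Your added remarks on uniqueness of the positive root and on the need for a forward-invariant neighborhood are sensible refinements but do not change the argument.
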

 \begin{proof}
 Consider the computation of the CP decomposition of the tensor $\tsr{X}$ with exact rank $R$ and initial guess $\mat{ \bar{A}}^{(n)}$ with normalized columns such that $ \mat{ \bar{A}}^{(n)}  = \mat A^{(n)} +\mat  {\Delta}^{(n)}$, with $\|\mat \Delta^{(n)}\|_2 <\epsilon$ sufficiently small for $n = 1, \ldots, N-1$ (as described in Lemma~\ref{lem:err_norm_mnorm}). Let the error in CPD at $nth$ iteration be given as
 \begin{align*}
     E_n = \max \{ \|\mat{\bar{D}} - \mat D\|_F, \| \mat{\bar{A}}^{(1)} - \mat A^{(1)}\|_F, \ldots, \|\mat{\bar{A}}^{(N)} - \mat A^{(N)}\|_F \}.
 \end{align*}
  Also, let the error in $kth$ subiteration of the $nth$ iteration be given as $\epsilon_k$. From Lemma~\ref{lem:err_norm_mnorm}, we know that the error in CPD is bounded by the maximum error in factor matrices. Since the error decreases at each subiteration, $\exists n$ such that $E_n = O(\epsilon_1)$.
 
 From Lemma~\ref{lem:err_norm_mnorm}, we know that the error in a subsweep of the algorithm modulo the column scaling is of the order of product of errors in previous $N-1$ subsweeps, therefore the error at $(n+1)th$ iteration is bounded by the error in the first factor matrix, given by  \[E_{n+1}= O\Big(\prod_{k=0}^{N-2}{\epsilon_{N-k}}\Big). \]
 By using  Lemma~\ref{lem:poly_conv}, we know that the error in a subiteration is given by the following recurrence, where $\alpha$ is the positive root of $x^{N-1} - \sum_{i=0}^{N-2}x^i$, 
\begin{align*}
    \epsilon_{k+1} = O(\epsilon_k ^\alpha) \quad \text{for all k} = 1, \ldots, N.
\end{align*}
Therefore, the error at $(n+1)th$ iteration of Algorithm~\ref{alg:mnorm_basic} can be expressed as 
 \begin{align*}
     E_{n+1} &= O\Big(\prod_{i=1}^{N-1}\epsilon_1^{\alpha^i}\Big)=O( E^{\sum_{i=1}^{N-1}\alpha^i}_{n}).
 \end{align*}
Using the fact that $\alpha$ is a root of the polynomial $x^{N-1} - \sum_{i=0}^{N-2}x^i$, implies that $\alpha^{N-1} - \sum_{i=0}^{N-2}\alpha^i=0$, that is, $\sum_{i=1}^{N-1}\alpha^i = \alpha^N$.
Therefore,
 \begin{align*}
      E_{n+1} &= O\Big(E_{n}^{\alpha^N}\Big).
 \end{align*}
\end{proof}
This completes the proof to show that Algorithm~\ref{alg:mnorm_basic} locally converges superlinearly for exact CP rank cases. We verify our theoretical results in the Section~\ref{sec:exp}.

\subsection{Convergence to Other Stationary Points}
\label{subsec:conv_approx}
The result in Theorem~\ref{thm:AMDM_ex_conv} can be generalized to the case where a tensor $\tsr{X}$ can be represented as the sum of two tensors, $\tsr{T}$ and $\tsr{E}$, where $\tsr{T}$ has an underlying CPD structure of rank $R$ and $\tsr{E}$ has a CP decomposition that is mostly orthogonal to the decomposition of $\tsr{T}$.
%as described in the following Lemma.
For such an input tensor $\tsr{X}$, we show that Algorithm~\ref{alg:mnorm_basic} with CP rank $R$, locally converges to the underlying CP factors, provided that $\tsr T$ is associated with a stationary point exists.
We analyze the convergence rate of the algorithm and show that this is a generalization of the previous result, since we converge to a subset of the CP factors with same convergence rate as in Theorem~\ref{thm:AMDM_ex_conv}, if the factors of $\tsr{T}$ are in the orthogonal complement of the column space of corresponding CP factors of $\tsr{E}$.

\begin{lemma}\label{lem:err_approx_mnorm}
%Suppose $\tsr{X} = [\![ \mat D ;\mat{A}^{(1)}, \ldots, \mat{A}^{(N)}]\!]$, where each $\mat{A}^{(i)}\in\mathbb{R}^{s_i\times R}$ with $s_i\geq R$ is full rank and has normalized columns, i.e., $\|\vcr{a}^{(i)}_j\|_2 = 1$ for all $i,j$.
%<<<<<<< HEAD
For a given tensor $\tsr{X}$, assume there exists a stationary point of Algorithm~\ref{alg:mnorm_basic}, yielding positive diagonal matrix $\mat D$ and factors $(\mat{A}^{(1)}, \ldots, \mat{A}^{(N)})$ where each $\mat{A}^{(i)}\in\mathbb{R}^{s_i\times R}$ with $s_i\geq R$ is full rank and has normalized columns, i.e., $\|\vcr{a}^{(i)}_j\|_2 = 1$ for all $i,j$, and their pseudoinverse-transposes $(\mat{U}^{(1)},\ldots,\mat{U}^{(N)})$, so $\mat{A}^{(n)}{}^\dagger=\mat{U}^{(n)}{}^T$.
%=======
%For a given tensor $\tsr{X}$, assume there exists a stationary point of Algorithm~\ref{alg:mnorm_basic}, yielding factors $(\mat{A}^{(1)}, \ldots, \mat{A}^{(N)})$ and their pseudoinverse-transposes $(\mat{U}^{(1)},\ldots,\mat{U}^{(N)})$, so $\mat{A}^{(n)}{}^\dagger=\mat{U}^{(n)}{}^T$.
%>>>>>>> c6fc3f1ad7fabe5207ab5b2a3de0b7c93cf8e6b4
The stationary point conditions imply that $\forall n\in\{1,\ldots, N\}$, we have
\[\mat{A}^{(n)}\mat D = \mat{X}_{(n)}\bigg(\bigodot_{m=1,m\neq n}^N \mat U^{(m)}\bigg).\]
Further, assume that $\tsr{X} =  [\![ \mat D ;\mat{A}^{(1)}, \ldots, \mat{A}^{(N)}]\!] + [\![ \hat{\mat{A}}^{(1)}, \ldots, \hat{\mat{A}}^{(N)}]\!]$ with $\|\hat{\mat{A}}^{(n)}{}^T\mat{U}^{(n)}\|_F\leq \epsilon_\perp$.
Given approximations $\tilde{\mat A}^{(n)} = \mat{A}^{(n)} + \mat{\Delta}_A^{(n)}$ and $\tilde{\mat U}^{(n)} = \tilde{\mat A}^{(n)}{}^\dagger{}^T= \mat{U}^{(n)} + \mat{\Delta}_U^{(n)}$ with normalized columns, then $\exists \epsilon>0$ such that if $\|\mat{\Delta}_A^{(n)}\|_F,\|\mat{\Delta}_U^{(n)}\|_F\leq \epsilon_n\leq \epsilon$ for $\forall n\in\{1,\ldots, N-1\}$, then
\[\tilde{\mat{A}}^{(N)} = \mat{X}_{(N)} (\tilde{\mat{U}}^{(1)} \odot \cdots \odot \tilde{\mat{U}}^{(N-1)})\]
satisfies $\|\tilde{\mat{A}}^{(N)}\bar{\mat{D}}^{-1} - \mat{A}^{(N)}\|=O(\epsilon \epsilon_\perp + \epsilon_1\ldots \epsilon_{N-1})$, where $\bar{\mat{D}}$ normalizes $\tilde{\mat{A}}^{(N)}$, i.e., $\bar{d}_{ii} = \|\tilde{\vcr{a}}^{(N)}_i\|_2$. Further, $\|\bar{\mat{D}} - \mat D\|_F = O(\epsilon).$
\end{lemma}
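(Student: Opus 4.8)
The plan is to substitute the decomposition of $\tsr{X}$ into the update formula for $\tilde{\mat{A}}^{(N)}$ and track two separate sources of error: the perturbation error coming from the inexact factors $\tilde{\mat U}^{(m)}$, and the contamination error coming from the orthogonal-complement tensor $[\![\hat{\mat{A}}^{(1)},\ldots,\hat{\mat{A}}^{(N)}]\!]$. First I would write $\mat{X}_{(N)} = \mat{A}^{(N)}\mat D\,(\bigodot_{m\neq N}\mat{A}^{(m)})^T + \hat{\mat{A}}^{(N)}(\bigodot_{m\neq N}\hat{\mat{A}}^{(m)})^T$, then right-multiply by $\bigodot_{m=1}^{N-1}\tilde{\mat{U}}^{(m)}$. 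The key algebraic fact is that $(\bigodot_{m}\mat{P}^{(m)})^T(\bigodot_m \mat{Q}^{(m)}) = \bigast_m (\mat{P}^{(m)T}\mat{Q}^{(m)})$, which turns each Khatri-Rao pair into a Hadamard product of small $R\times R$ matrices.

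For the first (structured) term, the analysis mirrors Lemma~\ref{lem:err_norm_mnorm} almost verbatim: I would write $\mat{A}^{(m)T}\tilde{\mat{U}}^{(m)} = (\tilde{\mat{A}}^{(m)} - \mat{\Delta}_A^{(m)})^T\tilde{\mat{U}}^{(m)} = \mat I - \mat{\Delta}_A^{(m)T}\tilde{\mat{U}}^{(m)}$, using $\tilde{\mat{A}}^{(m)T}\tilde{\mat{U}}^{(m)} = \mat I$ on the full-rank range. Taking the Hadamard product over $m=1,\ldots,N-1$ and expanding, every cross-term that contains at least one identity factor is diagonal, so it is absorbed into the norm-calibration matrix $\bar{\mat D}$; the only genuinely off-structure contribution is the all-perturbation product, whose norm is $O(\epsilon_1\cdots\epsilon_{N-1})$. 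This reproduces the $\epsilon_1\cdots\epsilon_{N-1}$ term in the claimed bound, and the argument that $\|\bar{\mat D}-\mat D\|_F = O(\epsilon)$ is identical to the one already given in Lemma~\ref{lem:err_norm_mnorm}.

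For the second (contamination) term, I would bound $\hat{\mat{A}}^{(N)}\,\bigast_{m=1}^{N-1}(\hat{\mat{A}}^{(m)T}\tilde{\mat{U}}^{(m)})$. Here I use the hypothesis $\|\hat{\mat{A}}^{(m)T}\mat{U}^{(m)}\|_F \leq \epsilon_\perp$ together with $\tilde{\mat{U}}^{(m)} = \mat{U}^{(m)} + \mat{\Delta}_U^{(m)}$, so that $\hat{\mat{A}}^{(m)T}\tilde{\mat{U}}^{(m)} = \hat{\mat{A}}^{(m)T}\mat{U}^{(m)} + \hat{\mat{A}}^{(m)T}\mat{\Delta}_U^{(m)}$ has norm $O(\epsilon_\perp + \epsilon_m)$. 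Each of the $N-1$ Hadamard factors is thus $O(\epsilon_\perp + \epsilon)$, and since the error budget is being collapsed to leading order I would argue that the product contributes at the level $O(\epsilon\,\epsilon_\perp)$ — the one power of $\epsilon_\perp$ survives from the stationary-point near-orthogonality while the remaining smallness is supplied by $\epsilon$. Combining the two terms and dividing by $\bar{\mat D}^{-1}$ (which is $O(1)$-conditioned for small $\epsilon$, as in the previous lemma) yields $\|\tilde{\mat{A}}^{(N)}\bar{\mat D}^{-1} - \mat{A}^{(N)}\| = O(\epsilon\epsilon_\perp + \epsilon_1\cdots\epsilon_{N-1})$.

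The main obstacle I anticipate is the bookkeeping for the contamination term: unlike the structured term, the cross-terms here are \emph{not} diagonal, so they cannot be swept into $\bar{\mat D}$ and must instead be controlled uniformly. I expect the delicate point is justifying that the contamination contributes only $O(\epsilon\,\epsilon_\perp)$ rather than $O(\epsilon_\perp)$ — this requires that when the factors are exact ($\epsilon = 0$) the stationary-point condition forces the contamination to vanish identically against the exact pseudoinverses, so that at least one additional factor of $\epsilon$ must accompany each appearance of $\epsilon_\perp$. Making this reduction rigorous, rather than merely plausible, is where the real care is needed; the perturbation-theoretic estimates for $\bar{\mat D}$ and $\mat{\Delta}_U$ are routine extensions of Lemma~\ref{lem:err_norm_mnorm}.
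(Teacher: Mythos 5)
Your proposal follows essentially the same route as the paper's proof: the same substitution of the two-part decomposition of $\tsr{X}$ into the update, the same reduction of Khatri--Rao products to Hadamard products of $R\times R$ blocks, the same absorption of the diagonal cross-terms of the structured part into $\bar{\mat D}$ (leaving the all-perturbation product of order $\epsilon_1\cdots\epsilon_{N-1}$), and the same key observation that the stationary-point condition forces $\hat{\mat{A}}^{(N)}\bigl(\hat{\mat{A}}^{(1)T}\mat{U}^{(1)}\ast\cdots\ast\hat{\mat{A}}^{(N-1)T}\mat{U}^{(N-1)}\bigr)=\mat 0$, so every surviving contamination term carries at least one $\mat{\Delta}_U$ factor and hence contributes $O(\epsilon\,\epsilon_\perp)$. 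The step you flag as delicate is exactly the one the paper resolves by explicitly expanding the Hadamard product and invoking that vanishing identity, so your plan is sound and matches the published argument.
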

\begin{proof}
We expand the update as follows,
\begin{alignat*}{2}
\tilde{\mat{A}}^{(N)} &= &&([\![ \mat{A}^{(1)}, \ldots, \mat{A}^{(N)}\mat D]\!] + [\![ \hat{\mat{A}}^{(1)}, \ldots, \hat{\mat{A}}^{(N)}]\!])_{(N)} (\tilde{\mat{U}}^{(1)} \odot \cdots \odot \tilde{\mat{U}}^{(N-1)}) \\
&= &&\mat{A}^{(N)}\mat D(\mat{A}^{(1)}{}^T\tilde{\mat{U}}^{(1)} \ast \cdots \ast \mat{A}^{(N-1)}{}^T\tilde{\mat{U}}^{(N-1)})  +
   \hat{\mat{A}}^{(N)}(\hat{\mat{A}}^{(1)}{}^T\tilde{\mat{U}}^{(1)} \ast \cdots \ast \hat{\mat{A}}^{(N-1)}{}^T\tilde{\mat{U}}^{(N-1)}) \\
&= &&\mat{A}^{(N)}\mat D((\mat I + \mat{A}^{(1)}{}^T\mat{\Delta}_U^{(1)}) \ast \cdots \ast(\mat I+ \mat{A}^{(N-1)}{}^T\mat{\Delta}_U^{(N-1)})) \\
&   +&&\hat{\mat{A}}^{(N)}((\hat{\mat{A}}^{(1)}{}^T\mat{U}^{(1)} + \hat{\mat{A}}^{(1)}{}^T\mat{\Delta}_U^{(1)})\ast \cdots \ast (\hat{\mat{A}}^{(N-1)}{}^T\mat{U}^{(N-1)} + \hat{\mat{A}}^{(N-1)}{}^T\mat{\Delta}_U^{(N-1)})).
\end{alignat*}
By the stationary point condition, we have that
\[\hat{\mat{A}}^{(N)}(\hat{\mat{A}}^{(1)}{}^T\mat{U}^{(1)} \ast \cdots \ast \hat{\mat{A}}^{(N-1)}{}^T\mat{U}^{(N-1)}) = \mat 0.\]
Consequently, the error reduces to the cross terms of the summations, i.e.,\footnote{For $N=3$, the right-hand side of this formula is \begin{align*}
&\mat{A}^{(3)}\mat D[\mat{A}^{(1)}{}^T\mat{\Delta}^{(1)}_U \ast \mat{A}^{(2)}{}^T\mat{\Delta}^{(2)}_U+\mat I \ast (\mat A^{(1)}{}^T\mat{\Delta}_U^{(1)} + \mat A^{(2)}{}^T\mat{\Delta}_U^{(2)})] \\
+&\hat{\mat{A}}^{(3)}[
    \hat{\mat{A}}^{(1)}{}^T\mat{U}^{(1)} \ast \hat{\mat{A}}^{(2)}{}^T\mat{\Delta}^{(2)}_U+
    \hat{\mat{A}}^{(2)}{}^T\mat{U}^{(2)} \ast \hat{\mat{A}}^{(1)}{}^T\mat{\Delta}^{(1)}_U].
\end{align*}
}
\begin{alignat*}{2}
%<<<<<<< HEAD
\tilde{\mat{A}}^{(N)} -\mat{A}^{(N)}\mat D &= &&\mat{A}^{(N)}\mat D\bigg[\bigast_{n=1}^N\mat{A}^{(n)}{}^T\mat{\Delta}_U^{(n)}+\mat I\ast \sum_{k=1}^{N-1}\bigg(\sum_{\{m_1,\ldots,m_k\}\subset\{1,\ldots, N\}} \bigast_{l=1}^k \mat{A}^{(m_l)}{}^T\mat{\Delta}_U^{(m_l)}\bigg)\bigg] \\
&   +&&\hat{\mat{A}}^{(N)}\bigg[\sum_{k=1}^{N-1}\bigg(\sum_{\substack{\{m_1,\ldots,m_k\}\subset\{1,\ldots, N\},\\ \{w_1,\ldots,w_{N-k}\}=\{1,\ldots,N\}\setminus\{m_1,\ldots,m_k\}}} \bigast_{l=1}^k \hat{\mat{A}}^{(m_l)}{}^T\mat{U}^{(m_l)}\bigast_{l=1}^{N-k}\hat{\mat{A}}^{(w_l)}{}^T\mat{\Delta}_U^{(w_l)}\bigg)\bigg]
%=======
%\tilde{\mat{A}}^{(N)} -\mat{A}^{(N)} &= &&\mat{A}^{(N)}\bigg[\bigast_{n=1}^{N-1}\mat{A}^{(n)}{}^T\mat{\Delta}_U^{(n)}+\mat I\ast \sum_{k=1}^{N-2}\bigg(\sum_{\{m_1,\ldots,m_k\}\subset\{1,\ldots, N-1\}} \bigast_{l=1}^k \mat{A}^{(m_l)}{}^T\mat{\Delta}_U^{(m_l)}\bigg)\bigg] \\
%&   +&&\hat{\mat{A}}^{(N)}\bigg[\sum_{k=1}^{N-2}\bigg(\sum_{\substack{\{m_1,\ldots,m_k\}\subset\{1,\ldots, N-1\},\\ \{w_1,\ldots,w_{N-k-1}\}=\{1,\ldots,N-1\}\setminus\{m_1,\ldots,m_k\}}} \bigast_{l=1}^k \hat{\mat{A}}^{(m_l)}{}^T\mat{U}^{(m_l)}\bigast_{l=1}^{N-k-1}\hat{\mat{A}}^{(w_l)}{}^T\mat{\Delta}_U^{(w_l)}\bigg)\bigg]
%>>>>>>> c6fc3f1ad7fabe5207ab5b2a3de0b7c93cf8e6b4
\end{alignat*}
First, since each error term has Frobenius norm $O(\|\mat{\Delta}^{(n)}\|)=O(\epsilon)$, the column norms of $\tilde{\mat{A}}^{(N)}$ will yield $\bar{\mat D}$ with \(\|\mat{\bar{D}} - \mat D \|_F = O(\epsilon)\).
Further, in order to get the error bound on $\tilde{\mat{A}}^{(N)}$, we consider the diagonal matrix,
\[\mat S = \mat I + \mat I\ast \sum_{k=1}^{N-1}\bigg(\sum_{\{m_1,\ldots,m_k\}\subset\{1,\ldots, N\}} \bigast_{l=1}^k \mat{A}^{(m_l)}{}^T\mat{\Delta}_U^{(m_l)}\bigg),\]
and show that \(\|\mat{\bar{D}} - \mat D\mat S \|_F = O(\epsilon_1\ldots \epsilon_{N-1}+\epsilon\epsilon_\perp)\).
Since,
\begin{alignat*}{2}
\tilde{\mat{A}}^{(N)} -\mat{A}^{(N)}\mat D\mat S &= &&\mat{A}^{(N)}\mat D\mat S\bigg[\mat S^{-1}\bigast_{n=1}^N\mat{A}^{(n)}{}^T\mat{\Delta}_U^{(n)}\bigg]\\
&   +&&\hat{\mat{A}}^{(N)}\bigg[\sum_{k=1}^{N-1}\bigg(\sum_{\substack{\{m_1,\ldots,m_k\}\subset\{1,\ldots, N\},\\ \{w_1,\ldots,w_{N-k}\}=\{1,\ldots,N\}\setminus\{m_1,\ldots,m_k\}}} \bigast_{l=1}^k \hat{\mat{A}}^{(m_l)}{}^T\mat{U}^{(m_l)}\bigast_{l=1}^{N-k}\hat{\mat{A}}^{(w_l)}{}^T\mat{\Delta}_U^{(w_l)}\bigg)\bigg], \end{alignat*}
and $\|\mat S-\mat I\|_F=O(\epsilon)$, we have
\[\|\tilde{\mat{A}}^{(N)} -\mat{A}^{(N)}\mat D\mat S\|_F = O(\epsilon_1\ldots\epsilon_{N-1} + \epsilon \epsilon_\perp).\]
The same bound follows for each column, so \(\|\mat{\bar{D}} - \mat D\mat S \|_F =O(\epsilon_1\ldots\epsilon_{N-1} + \epsilon \epsilon_\perp)\).
Now, using this bound on
\begin{alignat*}{2}
\tilde{\mat{A}}^{(N)}\bar{\mat D}^{-1} &= &&\mat{A}^{(N)}\mat D\mat S\bigg[\mat S^{-1}\bigast_{n=1}^N\mat{A}^{(n)}{}^T\mat{\Delta}_U^{(n)}+\mat I\bigg]\bar{\mat D} ^{-1}\\
&   + &&\hat{\mat{A}}^{(N)}\bigg[\sum_{k=1}^{N-1}\bigg(\sum_{\substack{\{m_1,\ldots,m_k\}\subset\{1,\ldots, N\},\\ \{w_1,\ldots,w_{N-k}\}=\{1,\ldots,N\}\setminus\{m_1,\ldots,m_k\}}} \bigast_{l=1}^k \hat{\mat{A}}^{(m_l)}{}^T\mat{U}^{(m_l)}\bigast_{l=1}^{N-k}\hat{\mat{A}}^{(w_l)}{}^T\mat{\Delta}_U^{(w_l)}\bar{\mat D}^{-1}\bigg)\bigg],
\end{alignat*}
we obtain
\begin{alignat*}{2}
\|\tilde{\mat{A}}^{(N)}\bar{\mat D}^{-1} -\mat A^{(N)}\|_F&= \Bigg\|\mat{A}^{(N)}\mat D\bigg[\bigast_{n=1}^N\mat{A}^{(n)}{}^T\mat{\Delta}_U^{(n)}\bigg]\bar{\mat D}^{-1} \\
& + \hat{\mat{A}}^{(N)}\bigg[\sum_{k=1}^{N-1}\bigg(\sum_{\substack{\{m_1,\ldots,m_k\}\subset\{1,\ldots, N\},\\ \{w_1,\ldots,w_{N-k}\}=\{1,\ldots,N\}\setminus\{m_1,\ldots,m_k\}}} \bigast_{l=1}^k \hat{\mat{A}}^{(m_l)}{}^T\mat{U}^{(m_l)}\bigast_{l=1}^{N-k}\hat{\mat{A}}^{(w_l)}{}^T\mat{\Delta}_U^{(w_l)}\bar{\mat D}^{-1}\bigg)\bigg]\bigg \|_F \\
 &+O(\epsilon_1\ldots\epsilon_{N-1} + \epsilon \epsilon_\perp) \\
&= O(\epsilon_1\ldots\epsilon_{N-1} + \epsilon \epsilon_\perp).
\end{alignat*}
%
%\[\mat S = \mat I\ast \sum_{k=1}^{N-1}\bigg(\sum_{\{m_1,\ldots,m_k\}\subset\{1,\ldots, N\}} \bigast_{l=1}^k (\mat{A}^{(m_l)}{}^T\mat{\Delta}_U^{(m_l)}),\]
%
%
%The second of the two terms above leads to error terms that contain, for some $\{m,n\}\subset\{1,\ldots, N-1\}$, $\hat{\mat{A}}^{(m)}{}^T\mat{U}^{(m)} \ast \hat{\mat{A}}^{(n)}{}^T\mat{\Delta}_U^{(n)}$ and hence the Frobenius norm of each term is at most $O(\epsilon \epsilon_\perp)$.
%In the first of the above two terms, we have that $\|\bigast_{n=1}^N\mat{A}^{(n)}{}^T\mat{\Delta}_U^{(n)}\|_2=O(\epsilon^{N-1})$, while the remaining error term is diagonal with magnitude $O(\epsilon)$.
%The diagonal error term is corrected by normalization by the same argument as given in the proof of Lemma~\ref{lem:err_norm_mnorm}.
%Consequently, the error bound posed in this Lemma holds.
\end{proof}

\section{Approximate Decomposition}
\label{sec:mnorm}
In the above sections, we have provided a motivation for Algorithm~\ref{alg:mnorm_basic} to compute a CP decomposition of rank $R$ with $R$ being less than or equal to the smallest mode length of the input tensor. We have shown in Theorem~\ref{thm:AMDM_ex_conv} that this algorithm exhibits a super linear local convergence rate for exact CP decomposition problems and achieves a desirable approximation for special input tensors as described in Lemma~\ref{lem:err_approx_mnorm}.  We now focus on the case of finding a good CP approximation for an arbitrary input tensor. We show that Algorithm~\ref{alg:mnorm_basic} can be viewed as performing coupled minimization of the residual error of the decomposition in terms of a Mahalanobis distance metric~\cite{chandra1936generalised}.
%This interpretation involves introduction of Mahalanobis ground metric matrices~\cite{chandra1936generalised} which depend on the current factors and are updated at each subsweep. 
Note that this perspective of the algorithm is different from the one introduced in Section~\ref{subsec:spectral_lagrange}, however it allows us to formulate an alternating minimization algorithm which generalizes the Algorithm~\ref{alg:mnorm_basic} to any CP rank and to interpolate between the updates of ALS and Algorithm~\ref{alg:mnorm_basic}.

\subsection{Mahalanobis Distance Minimization}

Each update of Algorithm~\ref{alg:mnorm_basic} may be viewed as minimizing a residual error with rescaled components.
For an order 3 tensor $\tsr{X}$ in updating the first factor, it minimizes
\begin{align}
\label{eq:transformed_obj}
    \|(\tsr{X} - [\![ \mat{A}, \mat{B}, \mat{C} ]\!])_{(1)}(\mat{C}^\dagger{}^T \otimes \mat{B}^\dagger{}^T)\|_F.
\end{align}
Since the column span of $\mat{C}\odot \mat{B}$ is the same as that of $\mat{C}^\dagger{}^T \otimes \mat{B}^\dagger{}^T$, the transformed residual preserves all components of the residual error that may be reduced in choosing $\mat{A}$ with $\mat{B}$ and $\mat{C}$ fixed, since the residual may be written as
\[\mat A (\mat I_R \odot \mat I_R)^T - \mat{X}_{(1)}(\mat C^{\dagger T}\otimes \mat B^{\dagger T}).\]
We show that this may be viewed as optimizing a single overall objective function relative to each factor, while keeping the distance metric associated with that factor independent (and then updating it thereafter).
This interpretation then enables us to extend Algorithm~\ref{alg:mnorm_basic} for any CP rank and introduce methods that are a hybrid of Algorithm~\ref{alg:mnorm_basic} and standard ALS.

%\subsubsection{Mahalanobis Distance Metric}
%\label{subsec:M_dist_metric}

%We now introduce a formulation  which generalizes the above introduced algorithm to compute the CP decomposition for ranks larger than the mode lengths of the tensor and also allows our algorithm to interpolate between updates corresponding to the ALS and the (Mnorm) algorithm. 
%The objective minimized above maybe simplified and rewritten as
%\begin{align}
%    \label{eq:Mnorm_obj_modified}
%    \notag
%f(\mat{A}^{(1)}, \cdots , \mat{A}^{(N)} ) &= \frac{1}{2}\|\vect(\tsr{X} - [\![ \mat{A}^{(1)}, \ldots, \mat{A}^{(N)} ]\!])^T&( \bigotimes_{k=1}^N\mat A^{(k)\dagger}{}^T)\|^2_F + \frac{1}{2}\|\vect(\tsr{X})^T(\bigotimes_{k=1}^N\mat Q^{(k)})\|_F^2 
%\end{align}
%and each $\mat Q^{(k)} =(\mat I - \mat A^{(k)} \mat A^{(k)\dagger})$  for all $k$. Another way to interpret $\mat Q$ is as the Kronecker product of the orthogonal complement of the column space of $\mat A^{(n)}$ for each $n$. For a fixed residual error, the first term in the objective is generally smaller if $\mat{A}$, $\mat{B}$, and $\mat{C}$ are well-conditioned. The second term is not affected by the conditioning of the factors,  but just the angle between the left singular vectors of the factors and the tensor matricizations. 

\subsubsection{Alternating Mahalanobis Distance Minimization}
\label{subsec:Alternating_mahalanobis}
We consider a variant of Mahalanobis distance~\cite{de2000mahalanobis},  which computes the distance between vectors $\vcr x$ and $\vcr y$ as $d(\vcr x,\vcr y)=(\vcr x-\vcr y)^T\mat M(\vcr x - \vcr y)$ for a given symmetric positive definite matrix $\mat{M}$. The matrix $\mat{M}$ is called the ground metric matrix. Ground metric generalizes the Euclidean distance to Mahalanobis distance by rotation and scaling of the axes along which the distance is computed. While, the underlying ground metric may already be known, it may also be learned via various metric learning techniques~\cite{bellet2013survey, kulis2013metric}.
%\edgar{Following sentence is unclear, more clarification would be helpful, regarding the relevance of Wasserstein distance (would stick to one terminology, since afaik EMD=Wasserstein}
In optimal transport applications and other applications which require computation of distance between probability distributions, a Wasserstein distance is considered instead. Wasserstein distance between tensors with a given ground metric has been considered for nonnegative CP decomposition~\cite{afshar2021swift}.  The ground metric in Wasserstein distance may be learned via similar metric learning techniques as in Mahalanobis distance~\cite{cuturi2014ground}. Simultaneous optimization for a ground metric and Wasserstein distance between matrices has been used for nonnegative matrix factorization~\cite{zen2014simultaneous}. 

We consider minimization of the Mahalanobis distance between tensors with a fixed ground metric which maybe updated later.
In particular, the objective function minimized for an input tensor $\tsr{X} \in \mathbb{R}^{I_1 \dots I_N}$, and factors $\mat A^{(i)} \in \mathbb{R}^{I_i \times R}$ is
\begin{align}
    \label{eq:obj_mnorm}
    f(\mat{A}^{(1)}, \cdots , \mat{A}^{(N)} ) &= \frac{1}{2} \text{vec}\big(\tsr{X} - \tsr{Y} \big)^T\mat{M}\text{vec}\big(\tsr{X} -  \tsr{Y}\big),\\
    \notag
    \text{where } \tsr{Y}&=[\![ \mat{A}^{(1)}, \cdots , \mat{A}^{(N)} ]\!].
\end{align}
We restrict the ground metric matrix $\mat M$ to be Kronecker structured defined as
\begin{align*}
    \mat M &= \bigotimes_{k=1}^N\mat M^{(k)-1}.
\end{align*}
Each $\mat{M}^{(k)-1}$ maybe viewed as a ground metric for each mode of the tensor. This restriction allows us to exploit the computational benefits of the structure and enables us to formulate an efficient alternating minimization algorithm. We consider the objective in \eqref{eq:obj_mnorm} for a general (fixed) ground metric for alternating optimization, which also allows us to formulate different algorithms for CP decomposition by changing the ground metric.
We derive an update for the alternating minimization with respect to $n$th factor matrix, given by
\begin{align}
\label{eq:obj_amdm}
    \mat{A}^{(n)} &= \min_{\mat A^{(n)}}\frac{1}{2}\|\mat X_{(n)} - \mat A^{(n)}\mat P^{(n)T}\|^2_{\mat M}, \\
    \nonumber
    &\text{where } \mat P^{(n)} = \bigodot_{m=1,m\neq n}^N\mat{A}^{(m)}.
\end{align} 
For succinct writing, let $\mat M_{(n)} =\bigotimes_{k=1, k\neq n}^N\mat M^{(k)-1}$. Since the objective function is quadratic in $\mat A^{(n)}$,
a minimizer of (\ref{eq:obj_amdm}) can be found by obtaining obtaining a gradient $\mat G^{(n)}$ with respect to the $n$th factor matrix and setting it to $\mat{0}$. The gradient is 
\[\mat G^{(n)} =\mat M^{(n)-1}\mat A^{(n)}\mat{P}^{(n)T}\mat{M}_{(n)}\mat{P}^{(n)} - \mat M^{(n)-1}\mat X_{(n)}\mat{M}_{(n)}\mat{P}^{(n)}.
\]
Setting the gradient above to be $\mat 0$ and equating $\mat M^{(n)}\mat M^{(n)-1} = \mat I$, we get an update for the $nth$ factor given as the solution of the following system,
\begin{align}
\label{eq: gen_update_amdm}
    \mat A^{(n)}\Big(\mat{P}^{(n)T}\mat{M}_{(n)}\mat{P}^{(n)}\Big) =\mat X_{(n)}\mat{M}_{(n)}\mat{P}^{(n)}.
\end{align}
Using the properties of Khatri-Rao products and Kronecker products, the update for the $n$th factor matrix reduces to the system of equations
\begin{align}
\label{eq:update_gen}
    \mat A^{(n)}\mat Z^{(n)} &=\mat X_{(n)}\mat{L}^{(n)},\\
    \notag
    \text{where }\mat L^{(n)} &= \bigodot_{k=1, k \neq n}^N \mat M^{(k)-1}\mat A^{(k)}, \\
    \notag
    \text{and } \mat Z^{(n)} &= \bigast_{k=1, k \neq n}^N\mat A^{(k)T}\mat M^{(k)-1} \mat A^{(k)}.
\end{align}
The above update leads to the ALS algorithm if $\mat M^{(k)} = \mat I$ for all $k$. We can retrieve Algorithm~\ref{alg:mnorm_basic} by defining 
\begin{align}
\label{eq:M_in_Mnorm}
    \mat M^{(k)} =\mat A^{(k)}\mat A^{(k)T} + (\mat I - \mat{A}^{(k)}\mat{A}^{(k)}{}^\dagger), \quad  \forall k \in \{1,\ldots,N\}.
\end{align}
The matrix $\mat I - \mat{A}^{(k)}\mat{A}^{(k)}{}^\dagger$ is inconsequential when applied to the factor matrices. It is included to ensure that each $\mat M^{(k)}$ is SPD. Since Algorithm~\ref{alg:mnorm_basic} can be retrieved from the above update, we refer to Algorithm~\ref{alg:mnorm_basic} as AMDM (Alternating Mahalanobis Distance Minimization).

Let us assume that the iteration involving the above derived alternating updates to each factor as in ~\eqref{eq: gen_update_amdm}, converges to a critical point. We can then bound the backward error in application of each matricization of the reconstructed tensor $\tsr{Y}=[\![ \mat{A}^{(1)}, \cdots , \mat{A}^{(N)} ]\!]$, since from~\eqref{eq: gen_update_amdm}, for each $n$, we have
\begin{align*}
    \mat A^{(n)}\Big(\mat{P}^{(n)T}\mat{M}_{(n)}\mat{P}^{(n)}\Big) -\mat X_{(n)}\mat{M}_{(n)}\mat{P}^{(n)} &= \mat 0, \\
\Big (\mat{Y}_{(n)} - \mat{X}_{(n)} \Big) \mat M_{(n)} \mat P^{(n)} &= \mat 0.
\end{align*}
Therefore, we have that
\begin{align*}
    \|\mat{Y}_{(n)}\vcr z - \mat{X}_{(n)} \vcr z \| = \| \mat X_{(n)} \vcr z^\perp \|,
\end{align*} where $\vcr z^\perp$ is the projection of $\vcr z \in \mathbb{R}^{\prod_{j=1 j \neq n} I_j}$ onto the orthogonal complement of column span of $\mat M_{(n)} \mat P^{(n)}$ or $\bigodot_{j=1, j\neq n}^N\mat M^{(j)} \mat A^{(j)}$. For ALS, AMDM, and the hybrid methods (introduced in Section~\ref{subsec:hybrid}) that interpolate between the both, it is sufficent to consider the projection onto the orthogonal complement of column span of $\bigodot_{j=1, j\neq n}^N \mat A^{(j)}$. This is because for each $j$, the ground metric matrices are chosen such that the column span of $\mat A^{(j)}$ is an invariant subspace of $\mat M^{(j)}$. 

\subsubsection{Comparison of AMDM and ALS for approximate rank-2 CPD}
We use the formulation introduced in the previous subsection to generalize AMDM to the case when CP rank $R$ is greater than the mode lengths and to derive hybrid methods that interpolate between AMDM and ALS.
The residual transformation tends to equalize the weight of contribution to the objective function attributed to components of the error associated with different rank-1 parts of the CP decomposition, $[\![ \vcr{a}_i, \vcr{b}_i, \vcr{c}_i ]\!]$ without increasing the collinearity of columns of the factors. We provide an example as an intuition for this assertion.

Consider a tensor $\tsr{X} = \lambda_1\tsr{X}_1 + \lambda_2\tsr{X}_2 + \tsr{N}$ where $\tsr{X}_1$ and $\tsr{X}_2$ are normalized rank-$1$ tensors and $\tsr{N}$ is noise of small magnitude. Assume that $ \lambda_1\gg \lambda_2$ and the rank-$1$ tensors are highly correlated, i.e., the factors have collinear columns. Let the current CPD approximation be $\tsr{Y} = [\![ \vcr{ \bar{\lambda}} ; \mat{A}, \mat{B}, \mat{C} ]\!] =  \bar{\lambda}_1\tsr{Y}_1 + \bar{\lambda}_2 \tsr{Y}_2$. 
The least squares objective minimizes
\begin{align*}
    \|\tsr{X} - \tsr{Y}\|_F^2 &=\text{vec} (\tsr{X} - \tsr{Y})^T \text{vec} (\tsr{X} - \tsr{Y}) \\
    &= \text{vec}(\tsr{E}_1 + \tsr{E}_2 + \tsr{N})^T \text{vec}(\tsr{E}_1 + \tsr{E}_2 + \tsr{N})\\
    &= \|\tsr{E}_1\|_F^2 + \|\tsr{E}_2\|_F^2 + 2\text{vec}(\tsr{E}_1)^T \text{vec}(\tsr{E}_2) + 2\text{vec}(\tsr{N})^T(\tsr{E}_1 + \tsr{E}_2 + \tsr{N}),
\end{align*}
where $\tsr{E}_1 = \lambda_1\bar{\tsr{X}}_1 - \bar{\lambda}_1 \tsr{Y}_1$ and $\tsr{E}_2 = \lambda_2\tsr{X}_2 - \bar{\lambda}_2 \tsr{Y}_2$. Alternating least squares algorithm may reduce $\|\tsr{E}_1\|_F$ and the component of $\tsr{E}_2$ in the direction of $\tsr{E}_1$, since it leads to reduction of the terms with larger contribution in the error. This causes an increase in collinearity of the approximated factors and a more ill-conditioned decomposition. On the other hand, the objective in~\eqref{eq:transformed_obj} can be expressed as 
\begin{align*}
    \|(\tsr{X} - \tsr{Y}) \times_1 \mat I \times_2 \mat B^{\dagger}{}^T \times_3 \mat C^{\dagger}{}^T   \|_F^2 &=\text{vec} (\tsr{X} - \tsr{Y})^T \mat M \text{vec} (\tsr{X} - \tsr{Y})\\
    &= \|\tsr{E}_1\|_{\mat M}^2 + \|\tsr{E}_2\|_{\mat M}^2 + 2\text{vec}(\tsr{E}_1)^T\mat M \text{vec}(\tsr{E}_2) + 2\text{vec}(\tsr{N})^T\mat M(\tsr{E}_1 + \tsr{E}_2 + \tsr{N}),
\end{align*}
where $\mat M = \mat C^{\dagger}{}^T \mat C^\dagger  \otimes \mat B^{\dagger}{}^T \mat B^\dagger  \otimes \mat I$. The matrix $\mat M$ rescales the components of the error according to the inverse of square of singular values of the factors, since
\begin{align*}
    \|\tsr{E}_1\|^2_{\mat M} &= \text{vec}(\tsr{E}_1)^T \mat M \text{vec}(\tsr{E}_1) \\
    &= \text{vec}(\bar{\tsr{E}}_1)^T (\mat \Sigma^{-2}_{\mat C} \otimes  \mat \Sigma^{-2}_{\mat B} \otimes \mat I ) \text{vec}(\bar{\tsr{E}}_1),
\end{align*}
where $\bar{\tsr{E}}_1 = \tsr{E}_1 \times_1 \mat I \times_2 \mat U_{\mat B} \times_3 \mat U_{\mat C}$, with $\mat U_{\mat B}$ and $\mat U_{\mat C}$ being the left singular vectors of $\mat B$ and $\mat C$ respectively. Thus, the error is rotated by the left singular vectors of $\mat B$ and $\mat C$, and then rescaled by square of inverse of singular values of $\mat B$ and $\mat C$, i.e., $\Sigma^{-2}_{\mat B}$, $\Sigma^{-2}_{\mat C}$.  Therefore, if the approximated factors are collinear, the contribution in the direction of the singular vectors of $\mat B$ and $\mat C$ with largest singular value is weighed proportionally less and similarly the contribution of error in the direction of those with smaller singular value is weighed more. This reduces the imbalance in error and leads to a better conditioned decomposition.
%However, the residual transformation is different for the subproblems associated with each factor and dependent on the remaining factors.

\subsection{Generalizing AMDM to Any CP Rank}
The AMDM algorithm as described in Algorithm~\ref{alg:mnorm_basic} imposes a constraint that CP rank should be less than or equal to the smallest mode length of the tensor. We now describe how the update in~\eqref{eq:update_gen} with the ground metric as defined in~\eqref{eq:M_in_Mnorm} leads to the definition of AMDM without conditions imposed on the rank. For each $\mat M^{(k)}$, 
\begin{align*}
    \mat M^{(k)-1} = \mat A^{(k)\dagger T} \mat A^{(k)\dagger} + (\mat I - \mat A^{(k)} \mat A^{(k)\dagger}).
\end{align*}
The linear system for updating the $n$th factor matrix as in (\ref{eq:update_gen}) can then be simplified to get
\begin{align*}
    \begin{split}
        \mat A^{(n)}\mat Z^{(n)} =\mat X_{(n)}\mat{L}^{(n)}&,\\
        \text{where }\mat L^{(n)} = \bigodot_{k=1, k \neq n}^N \mat M^{(k)-1}\mat A^{(k)} &= \bigodot_{k=1, k \neq n}^N\mat A^{(k)\dagger}{}^T, \\
        \text{and } \mat Z^{(n)} = \bigast_{k=1, k \neq n}^N\mat A^{(k)T}\mat M^{(k)-1} \mat A^{(k)} &= \bigast_{k=1, k \neq n}^N\mat A^{(k)\dagger}\mat A^{(k)}.
    \end{split}
\end{align*}
The above update is equivalent to Algorithm~\ref{alg:mnorm_basic} when CP rank $R \leq I_m$, $\forall m \in \{1,\dots,N\}$, since in that  case $\mat{Z}^{(n)} = \mat I$ for all $n$. For the case when CP rank $R$ is larger than the mode lengths, we get a symmetric semi-definite system of equations. Since the system is semi-definite, a pivoted Cholesky decomposition followed by a triangular solve be used for the solution to exist. Alternatively, as in ALS, a regularization term may be introduced to make the system positive definite.
The cost of forming this system, $\mat Z^{(n)}$ is of the same leading order as ALS, i.e., $O(IR^2)$ per subsweep, since it requires to obtain a pseudo inverse of the factor in $(n-1)$th iteration, matrix multiplication and Hadamard products of the factors and their previously obtained pseudoinverses. The system solve amounts to a computational cost of $O(R^3)$ to solve the system.

\subsection{Interpolating Between AMDM and ALS}
\label{subsec:hybrid}
Performing alternating Mahalanobis distance minimization with an identity ground metric is equivalent to performing ALS. To explore methods that interpolate between AMDM and ALS, we can interpolate the ground metric between the identity matrix and the one associated with AMDM given in~\eqref{eq:M_in_Mnorm}. We can find such ground metrics by decomposing each factor matrix into two low rank matrices, such that $\mat A^{(k)} = \mat A_1^{(k)} + \mat A^{(k)}_2$, where $\mat A^{(k)}_1$ is the best rank-$t$ approximation of $\mat A^{(k)}$. In other words, $\mat A^{(k)}_1$ contains the largest $t$ singular values and the corresponding  singular vectors of $\mat A^{(k)}$ and $\mat A^{(2)}$ contains the rest. The ground metric for each mode can then be defined using only the first part as
\begin{align}
\label{eq:M_in_gen_AMDM}
    \mat M^{(k)} =\mat A^{(k)}_1\mat A^{(k)T}_1 + (\mat I - \mat{A}^{(k)}_1\mat{A}_1^{(k)}{}^\dagger), \quad  \forall k \in \{1,\ldots,N\}.
\end{align}
By defining a ground metric based on only the first part of the singular value decomposition of the factors leads to hybrid methods, since if the first part is all of the singular value decomposition then we get back the ground metric in AMDM and if it is none of the same then we get back the identity matrix by convention as the orthogonal complement of none is everything.  

Note that for each $k$,
\begin{align*}
    \mat M^{(k)-1} = \mat A_1^{(k)\dagger T} \mat A_1^{(k)\dagger} + (\mat I - \mat A_1^{(k)} \mat A_1^{(k)\dagger}).
\end{align*}
The update for the $n$th factor matrix also becomes a combination of AMDM and ALS where the first part of the singular value decomposition of factors is treated as in AMDM and the second one as in ALS. More precisely, the system of equations for updating the $n$th factor matrix is
\begin{align}
    \notag
    \mat A^{(n)}\mat Z^{(n)} =\mat X_{(n)}\mat{L}^{(n)}&,\\
    \notag
    \text{where }\mat L^{(n)} = \bigodot_{k=1, k \neq n}^N \mat M^{(k)-1}\mat A^{(k)} &= \bigodot_{k=1, k \neq n}^N(\mat A_1^{(k)\dagger} + \mat A_2^{(k)}), \\
    \text{and } \mat Z^{(n)} = \bigast_{k=1, k \neq n}^N\mat A^{(k)T}\mat M^{(k)-1} \mat A^{(k)} 
    &= \bigast_{k=1, k \neq n}^N(\mat A^{(k)\dagger}_1 \mat A^{(k)}_1 + \mat A_2^{(k)T} \mat A^{(k)}_2).
    \label{eq:update_amdm}
\end{align}
We describe the above derived hybrid algorithm in Algorithm~\ref{alg:mnorm}.
The algorithm starts by normalizing columns of all the factors and absorbing norms in the first factor as described in Section~\ref{sec:alg} and then computing a reduced SVD of all the factors which costs $O( \sum_{n=1}^NI_nR \min(I_n,R))$. 
 At the $n$th subsweep of the algorithm, operations performed are similar in computational cost as that of ALS. Right and left hand sides of the system, $\mat L_n$ and $\mat  Z_n$ require $O(I_nR^2)$ and $O(R^2 \min (I_n,R))$ operations respectively. The symmetric semi definite system solve requires $O(R^3)$. Computing the right hand side, i.e., performing MTTKRP is the most computationally expensive operation with a cost of $O(\prod_{n=1}^N I_nR)$ for each subsweep. In addition to this, the factor matrix obtained after the solve is normalized and a reduced SVD is obtained to update the singular value decomposition which costs $O(I_nR\min(I_n,R))$. Therefore, the asymptotic computational cost of the algorithm is the same as ALS being $O(\prod_{n=1}^N I_nR)$.

\begin{algorithm}[]

    \caption{\textbf{General-AMDM}: Alternating Mahalanobis Distance Minimization with singular value thresholding}
\begin{algorithmic}[1]
\small
\State{\textbf{Input: }Tensor $\tsr{X}\in\mathbb{R}^{I_1\times\cdots\times I_N}$, 
threshold $t$, rank $R$}
\State{Initialize $\{\mat{A}^{(1)}, \ldots , \mat{A}^{(N)}\}$ so each $\mat{A}^{(n)}\in\mathbb{R}^{I_n\times R}$ is random
}

\For{\texttt{$n \in \inti{2}{N} $}}
    \State{$\mat A^{(n)} =$ normalize($\mat A^{(n)})$}
    \State{$\mat U^{(n)} =$ $\min{(I_n,R)}$ left singular vectors of $\mat A^{(n)}$}
    \State{$\mat V^{(n)} =$ $\min{(I_n,R)}$ right singular vectors of $\mat A^{(n)}$}
    \State{$\vcr{s}^{(n)} =$ $\min{(I_n,R)}$ singular values of $\mat A^{(n)}$}
\EndFor
\While{\texttt{Convergence}}
    \For{\texttt{$n \in \inti{1}{N} $}}
        \For{\texttt{$m \in \inti{1}{N},m \neq n$}}
            \State{$\vcr{s}^{(m)}_\text{ps}=$ first $t$ values of $\vcr s^{(m)}$ inverted and others as it is}
            \State{$\mat L_m = \mat U^{(m)}\textbf{diag}( \vcr s^{(m)}_\text{ps})\mat V^{(m)T}$}
            \State{$\mat Z_m = \mat V^{(m)}\textbf{diag}(\vcr s^{(m)}_\text{ps}\ast \vcr s^{(m)})\mat V^{(m)T}$}
        \EndFor
    \State{Solve for $\mat A^{(n)}$ in $\mat A^{(n)}\mat Z^{(n)} = \mat X_{(n)} \mat L^{(n)} $ \quad as in \eqref{eq:update_gen}}
    \State{\texttt{Check Convergence, if converged: Break}}
    \State{$\mat A^{(n)} =$ normalize$(\mat A^{(n)})$}
    \State{Update $\mat U^{(n)}=$ $\min{(I_n,R)}$ left singular vectors of $\mat A^{(n)}$}
    \State{Update $\mat V^{(n)}=$ $\min{(I_n,R)}$ right singular vectors of $\mat A^{(n)}$}
    \State{Update $\vcr s^{(n)}=$ $\min{(I_n,R)}$ singular values of $\mat A^{(n)}$}
    \EndFor
\EndWhile
\State{\Return factor matrices $\{\mat{A}^{(1)}, \ldots , \mat{A}^{(N)}\}$ }
\end{algorithmic}
\label{alg:mnorm}
\end{algorithm}

%\section{Conditioning}
%\label{sec:cond}
%\input{cond}

\section{Numerical Experiments}
\label{sec:exp}
We perform numerical experiments to demonstrate the convergence behaviour of the AMDM algorithm for various tensors which include synthetic examples and tensors arising in different applications. We use absolute residual and fitness of the decomposition in Frobenius norm as metrics to measure the closeness of the decomposition to the input tensor. For an input tensor $\tsr{X}$, these are given as
\[ r= \|\tsr{X} - \tsr{Y} \|_F \text{ and } f= 1 - \frac{\|\tsr{X} - \tsr{Y} \|_F }{\|\tsr{X}\|_F, }
\]
respectively, where $\tsr{Y} =[\![ \mat{A}^{(1)}, \cdots , \mat{A}^{(N)} ]\!]$ is the approximated tensor. For measuring the stability of the decomposition or the degree of overlap of CP components, we use the normalized CPD condition number~\cite{breiding2018condition} to measure the sensitivity or degree of overlap of rank $1$ components of the decomposition. 

The normalized CP condition number is given by the reciprocal of the smallest singular value of the Terracini's matrix associated with the CP decomposition. For an equidimensional tensor of order $N$ with mode length $s$ and CP rank $R$, the size of Terracini's matrix is $s^N \times (N(s-1) + 1)R$. For CP rank lower than mode lengths of the tensor, this matrix can be compressed to $R^N \times (N(R-1) + 1)R$ and the CPD condition number can be efficiently computed with a cost of $O(R^{N+4})$. The details of  computation of the condition number are in Appendix~\ref{sec:app}. Our experiments consider two types of synthetic tensors.

\noindent
\textbf{Tensor made by random matrices} (\textit{Random} tensor). We create these tensors based on known uniformly distributed randomly-generated factor matrices $\mat{A}^{(n)}\in (0,1)^{s\times R}$, 
   \(
    \tsr{X} = [\![ \mat{A}^{(1)}, \ldots , \mat{A}^{(N)} ]\!]. 
    \)

\noindent
\textbf{Tensor made by collinear random matrices} (\textit{Collinearity} tensor). We use a similar approach as used in~\cite{acar2011scalable} to generate factor matrices with a fixed value of collinearity, say C. That means that these tensors are created with randomly-generated factors $\mathbf{A}^{(n)} \in \mathbb{R}^{s \times R}$ with the following property,
\begin{align*}
    \mathbf{a}_r^{(n)T}\mathbf{a}_z^{(n)} &=C, \\
    \text{s.t. } \|\mathbf{a}^{(n)}_r\|&=1, \forall r \neq z \in \{1,\cdots,R\}.
\end{align*}
We then set $\lambda_i = i$, $\forall i \in \{1,\cdots,R\}$ to create a tensor,
   \(
    \tsr{X} = [\![ \boldsymbol{\lambda} ; \mat{A}^{(1)}, \ldots , \mat{A}^{(N)} ]\!]. 
    \)

We consider four tensors from various real world applications.

\noindent
\textbf{Sleep-EDF tensor}: This dataset has been used to identify sleeping patterns. It comprises of electroencephalogram (EEG), electromyography (EMG)  data  in the non-rapid eye movement (NREM) stage of sleep~\cite{kemp2000analysis}.

\noindent
\textbf{MGH tensor}: This dataset consists of data from  Massachusetts General Hospital. It includes combinations of electroencephalogram (EEG), respiratory signals, and electromyogram signals (EMG). This dataset was used to analyze sleep using deep neural networks~\cite{biswal2018expert}.

\noindent
\textbf{SCF tensor}.
    We consider the density fitting tensor (Cholesky factor of the two-electron integral tensor) arising in quantum chemistry.
    This tensor has been used previously in~\cite{singh2021comparison} to compare the efficacy of the Gauss-Newton and alternating least squares algorithm.
    We leverage the PySCF library~\cite{sun2018pyscf} to generate the three dimensional compressed density fitting tensor, representing the compressed restricted Hartree-Fock wave function of water molecule chain systems with STO-3G basis set. The number of molecules in the system is set to three for this experiment.

\noindent
\textbf{Amino acid tensor}. This data set consists of five simple laboratory-made samples. 
Each sample contains different amounts of tyrosine, tryptophan and phenylalanine 
dissolved in phosphate buffered water. 
The samples were measured by fluorescence~\cite{bro1997parafac}.

The experiments are divided broadly into two categories,

\noindent
\textbf{Exact decomposition.} We create synthetic tensors with known CP rank $R$ and compare the convergence behaviour of the AMDM algorithm with the alternating least squares algorithm for exact CP decomposition.

\noindent
\textbf{Approximate decomposition.}
We create synthetic tensors with known CP rank and special structure such as with added noise or as described in Lemma~\ref{lem:err_approx_mnorm}. We then approximate these tensors with CP rank $R$ which is lower than the underlying decomposition rank.  We also consider real world tensors from different applications with unknown CP rank.

\subsection{Exact CP decomposition}
We compare alternating least squares and Algorithm~\ref{alg:mnorm_basic} for computing exact CP decomposition of synthetic tensors in Figure~\ref{fig:higher_order_conv} and~\ref{fig:roc_and_first_ord_conv} and verify our theoretical results. We create \textit{Collinearity} and \textit{Random} tensors of specified CP rank to analyze the convergence of these algorithms.

\begin{figure*}[t]
\centering
\begin{subfigure}[Random tensor residual] {\label{fig:rand_ord}\includegraphics[width=0.45\textwidth, keepaspectratio]{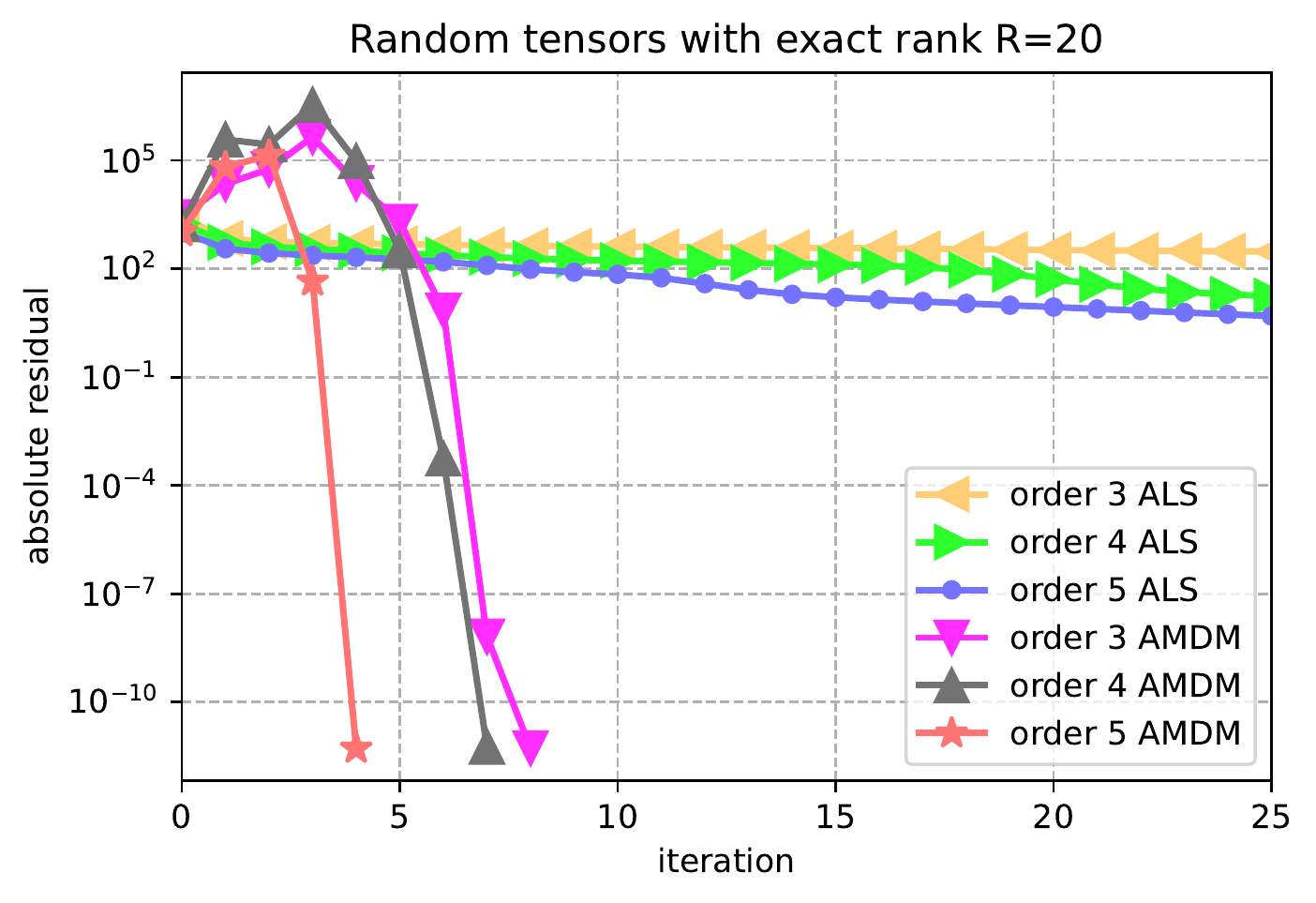}} \end{subfigure}
\begin{subfigure}[Collinearity tensor residual] {\label{fig:coll_ord}\includegraphics[width=0.45\textwidth, keepaspectratio]{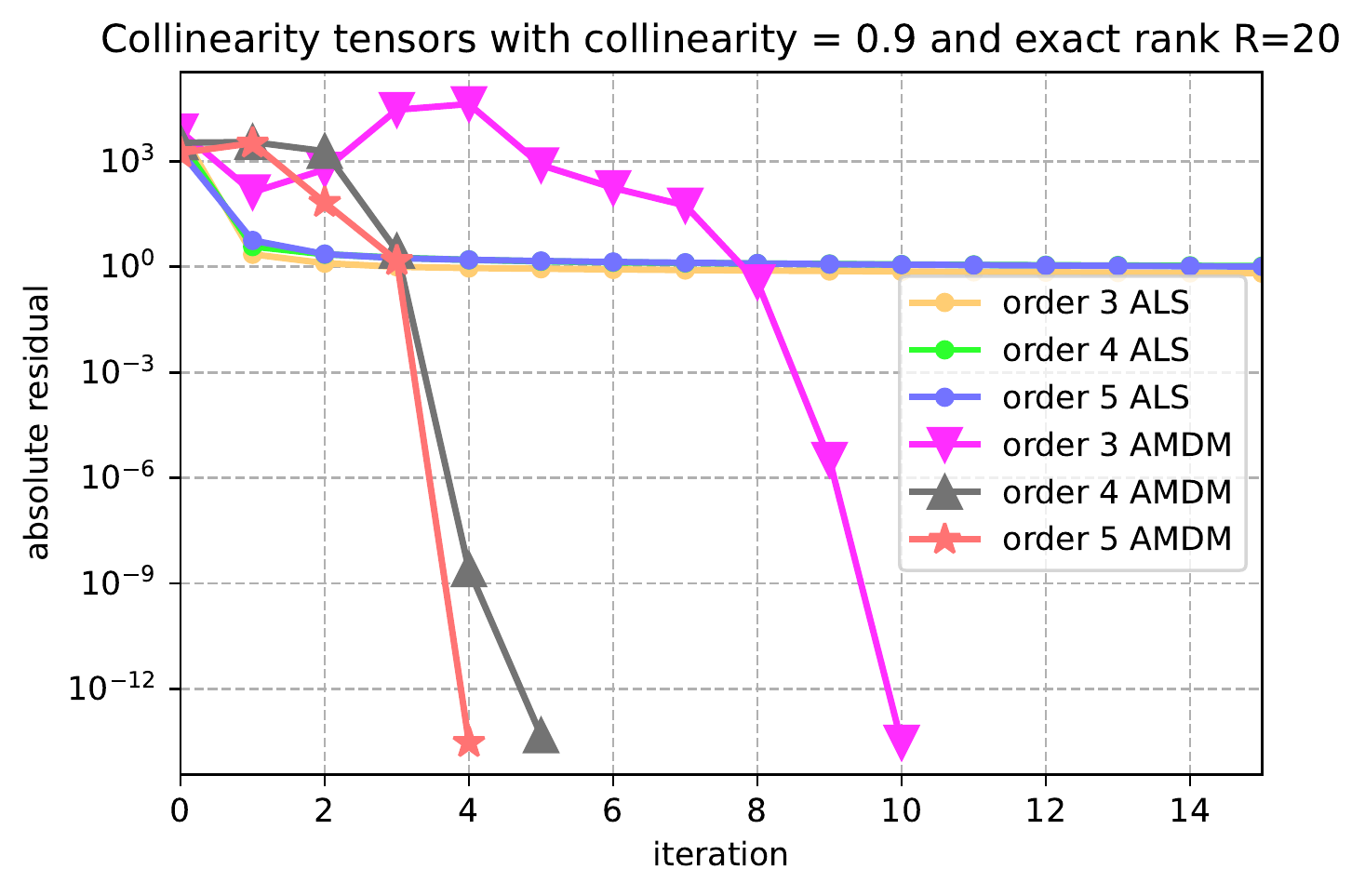}} \end{subfigure}
\caption{Superlinear convergence of AMDM algorithm for exact CPD}
\label{fig:higher_order_conv}
\end{figure*}

\begin{figure*}[t]
\centering
\begin{subfigure}[Rate of convergence] {\label{fig:roc}\includegraphics[width=0.50\textwidth, keepaspectratio]{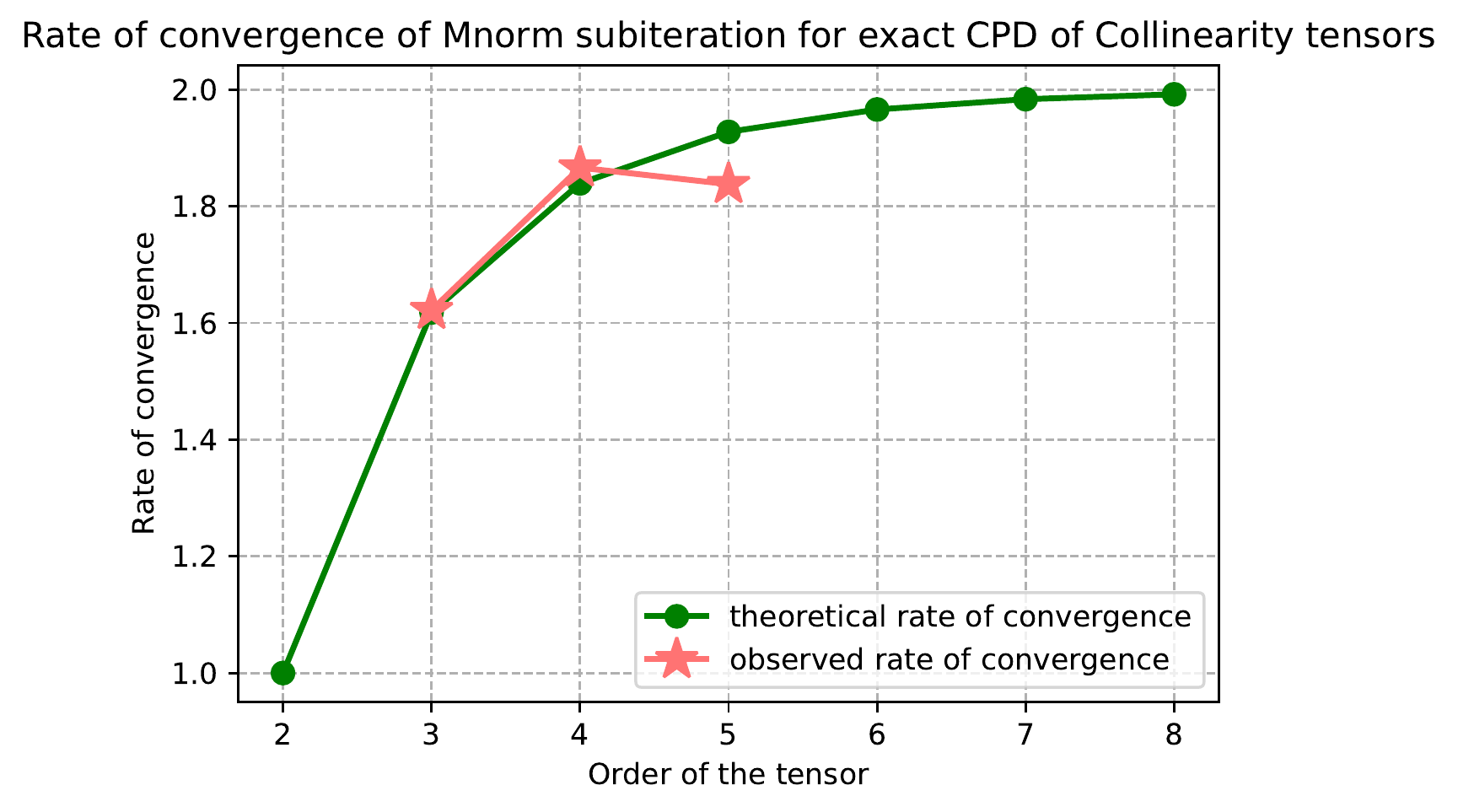}} \end{subfigure}
\begin{subfigure}[AMDM for rank larger than dimension]{\label{fig:rand_large}\includegraphics[width=0.45\textwidth, keepaspectratio]{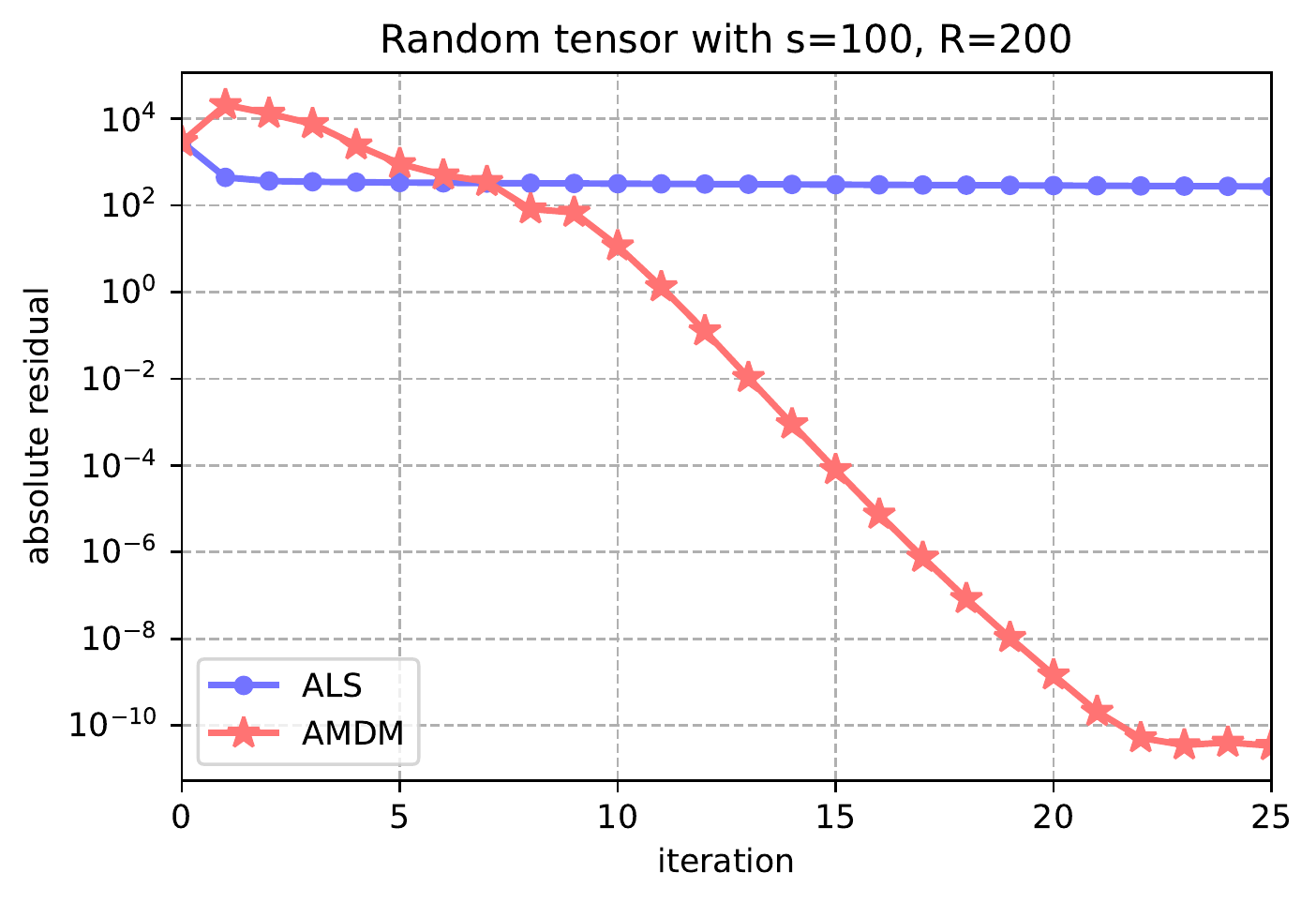}} \end{subfigure}
\caption{Rate of convergence of AMDM subiteration and linear convergence for large rank for exact CPD}
\label{fig:roc_and_first_ord_conv}
\end{figure*}

In Figure~\ref{fig:higher_order_conv}, we create equidimensional synthetic tensors of order $N$ with each mode length $s$ to follow $s^N=100^{3}$ with fixed CP rank $R=20$.  We initialize the factors with uniformly distributed random matrices for both the algorithms and plot the absolute residual for each iteration. For the \textit{Collinearity} tensors, collinearity value, i.e., $C$ is set to be $0.9$. We can observe superlinear convergence of Algorithm~\ref{alg:mnorm_basic} for both the cases, while ALS appears to be slowed down by the `swamp' phenomenon for the \textit{Collinearity} tensors. In Figure~\ref{fig:roc}, we plot
the empirical rate of convergence of a subiteration by using the relative residual after subiterations. Since order 2 is just matrix decomposition, AMDM and ALS algorithm become equivalent with a linear convergence rate. We can observe that the rate of local convergence for order 3 and 4 is consistent with what we observed in Section~\ref{sec:conv} with an error of $0.2\%$ and $1\%$ for order $3$ and $4$ respectively.  For order 5 and above, it is difficult to verify the rate of convergence as there does not exist two data points of subiterations where we do not convergence to machine precision and at the same time the assumptions for error to be small are satisfied.

In Figure~\ref{fig:rand_large}, we create a \textit{Random} equidimensional tensor of with mode length $s=100$ and CP rank $R=200$. We can observe that the Algorithm~\ref{alg:mnorm} converges linearly to the exact solution while taking only a few iterations to do so. Since  $R>s$, we observe a linear convergence rate which is consistent with our theoretical results. ALS makes slow progress for this case. A reason for that might again be related to the collinearity of the factors, since when $R>s$, collinearity is high and ALS is more likely to experience the `swamp' phenomenon~\cite{mitchell1994slowly}.

\subsection{Approximate CP decomposition}

\begin{figure*}[t]
\centering
\begin{subfigure}[Probability of convergence for equidimensional tensors with mode length$=10$, exact CP rank$=10$ and approximate rank$=5$] {\label{fig:prob_10}\includegraphics[width=0.45\textwidth, keepaspectratio]{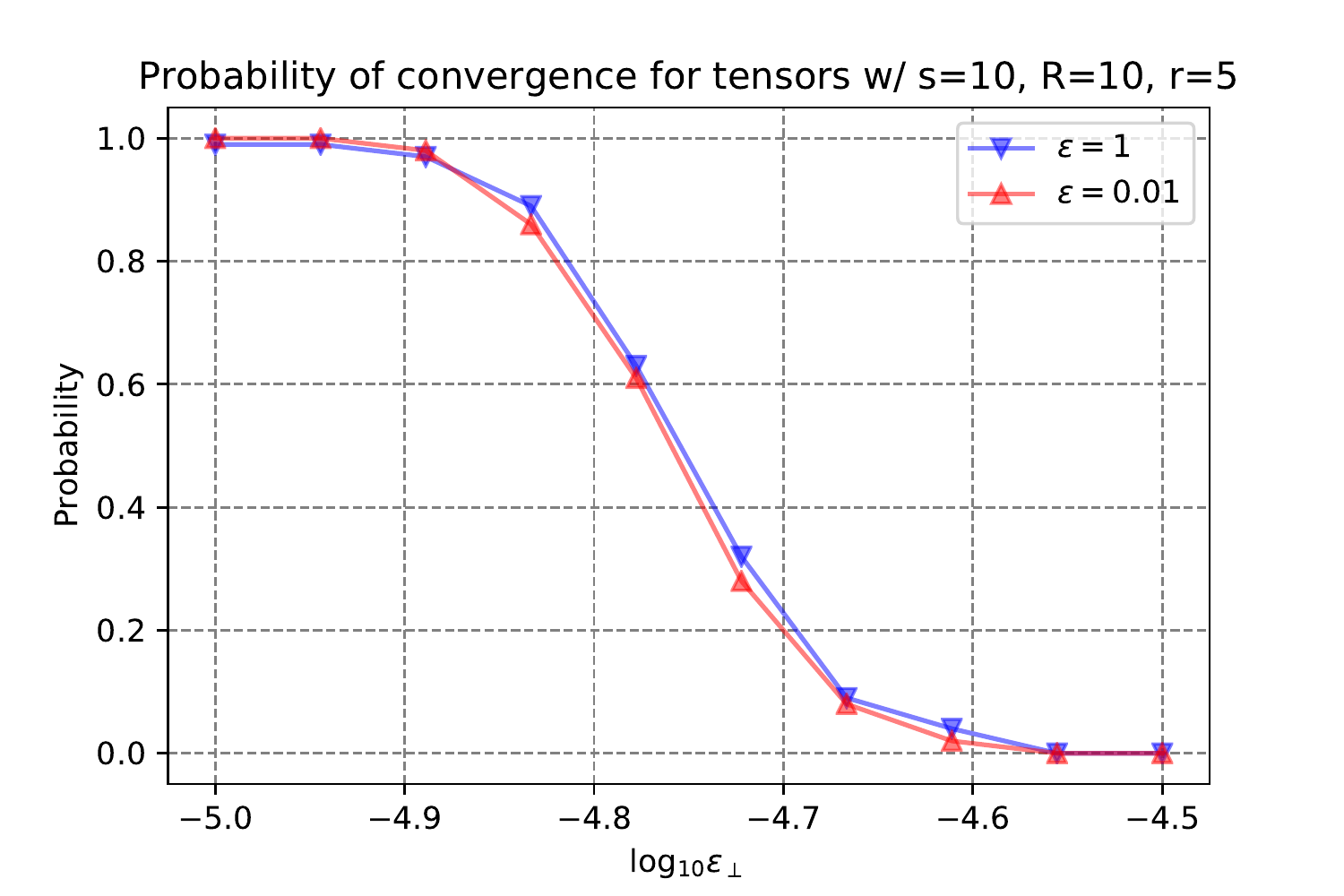}} \end{subfigure}
\begin{subfigure}[Probability of convergence for equidimensional tensors with mode length$=100$, exact CP rank$=100$ and approximate rank$=50$] {\label{fig:prob_100}\includegraphics[width=0.45\textwidth, keepaspectratio]{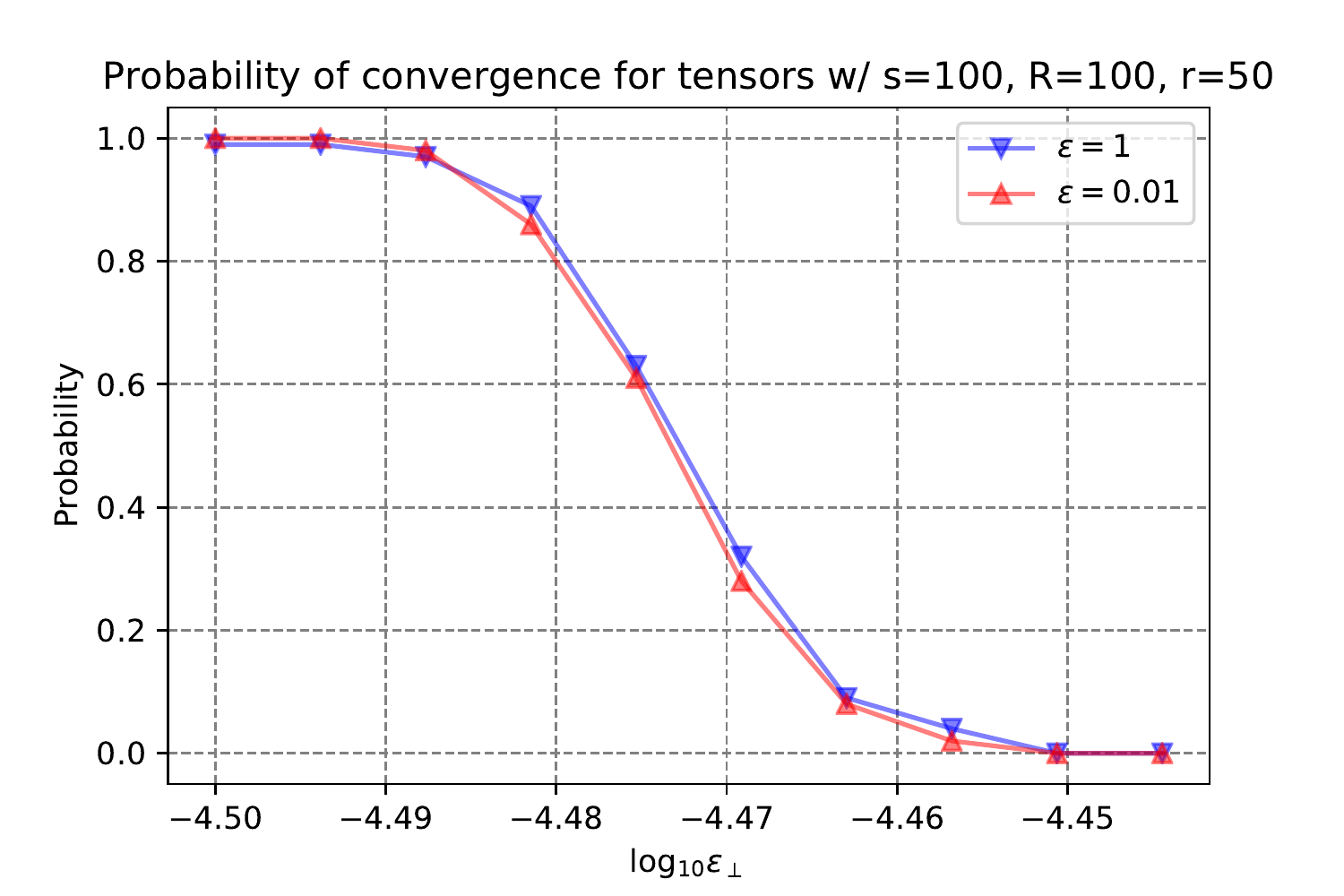}} \end{subfigure}
\caption{Probability of convergence for $100$ tensors with $5$ initial guesses in the setting as described in Lemma~\ref{lem:err_approx_mnorm}}
\label{fig:probability}
\end{figure*}

\begin{figure*}[t]
\centering
\begin{subfigure}[\textit{Collinearity} tensor with Gaussian noise fitness] {\label{fig:proba}\includegraphics[width=0.45\textwidth, keepaspectratio]{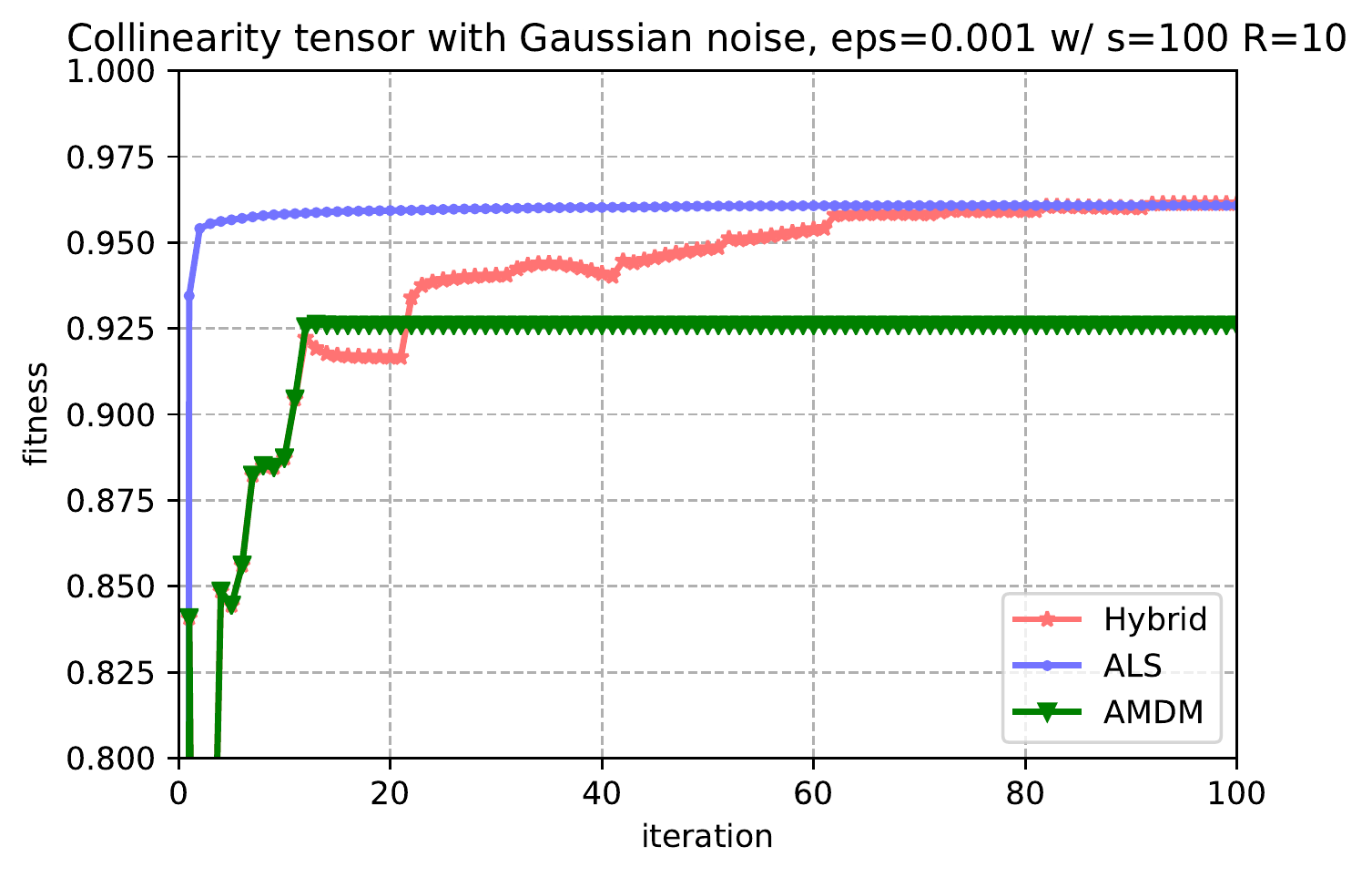}} \end{subfigure}
\begin{subfigure}[\textit{Collinearity} tensor with Gaussian noise condition number] {\label{fig:resplot}\includegraphics[width=0.45\textwidth, keepaspectratio]{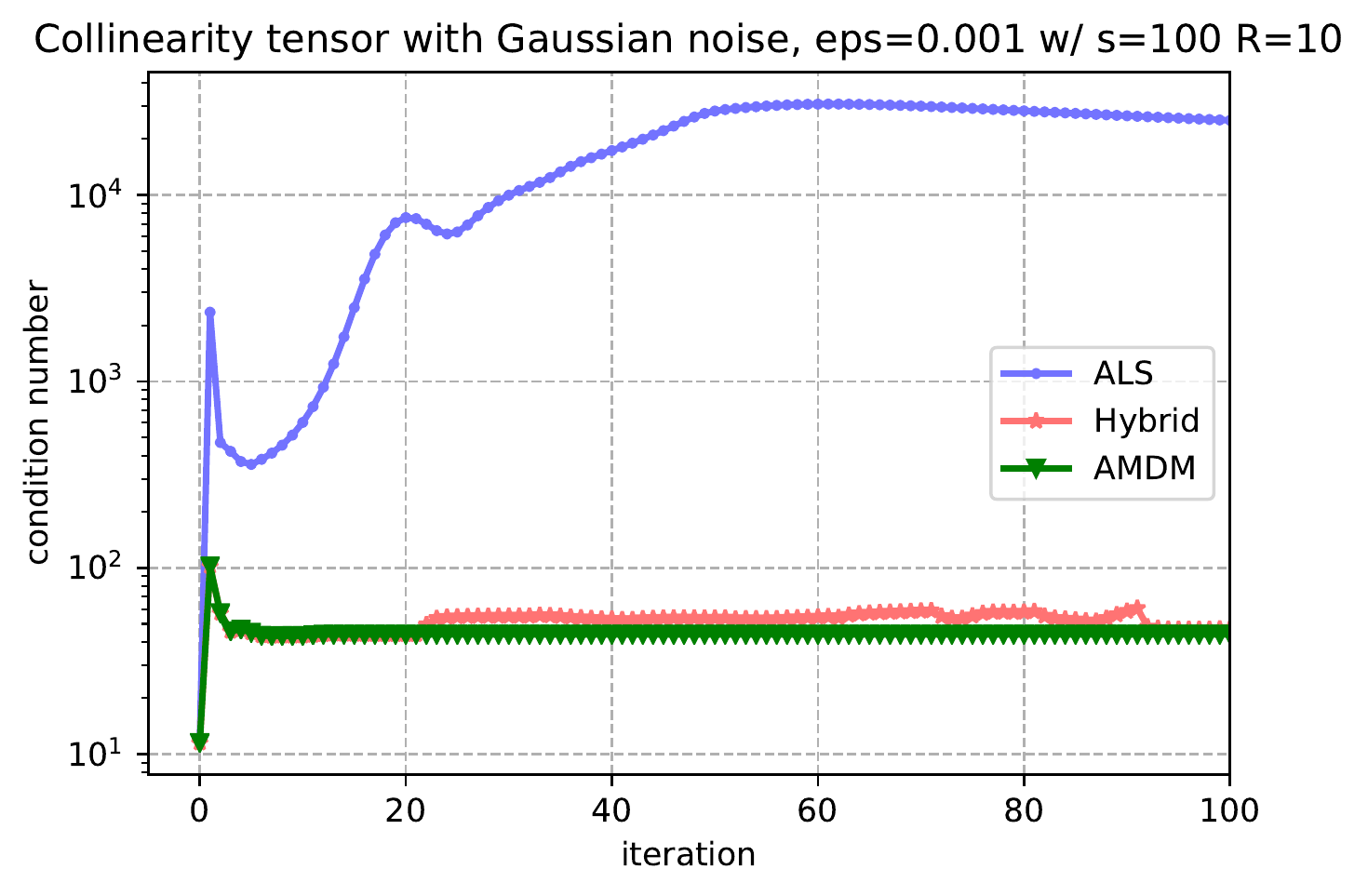}} \end{subfigure}
\caption{ Collinearity tensor of size $100 \times 100 \times 100$ with exact CP rank $R=10$ with added Gaussian noise with each entry distributed with mean $\mu =0$ and standard deviation $\sigma = 0.001$}
\label{fig:noisy}
\end{figure*}
\begin{figure*}[t]
\centering
\begin{subfigure}[SLEEP tensor fitness] {\label{fig:proba}\includegraphics[width=0.45\textwidth, keepaspectratio]{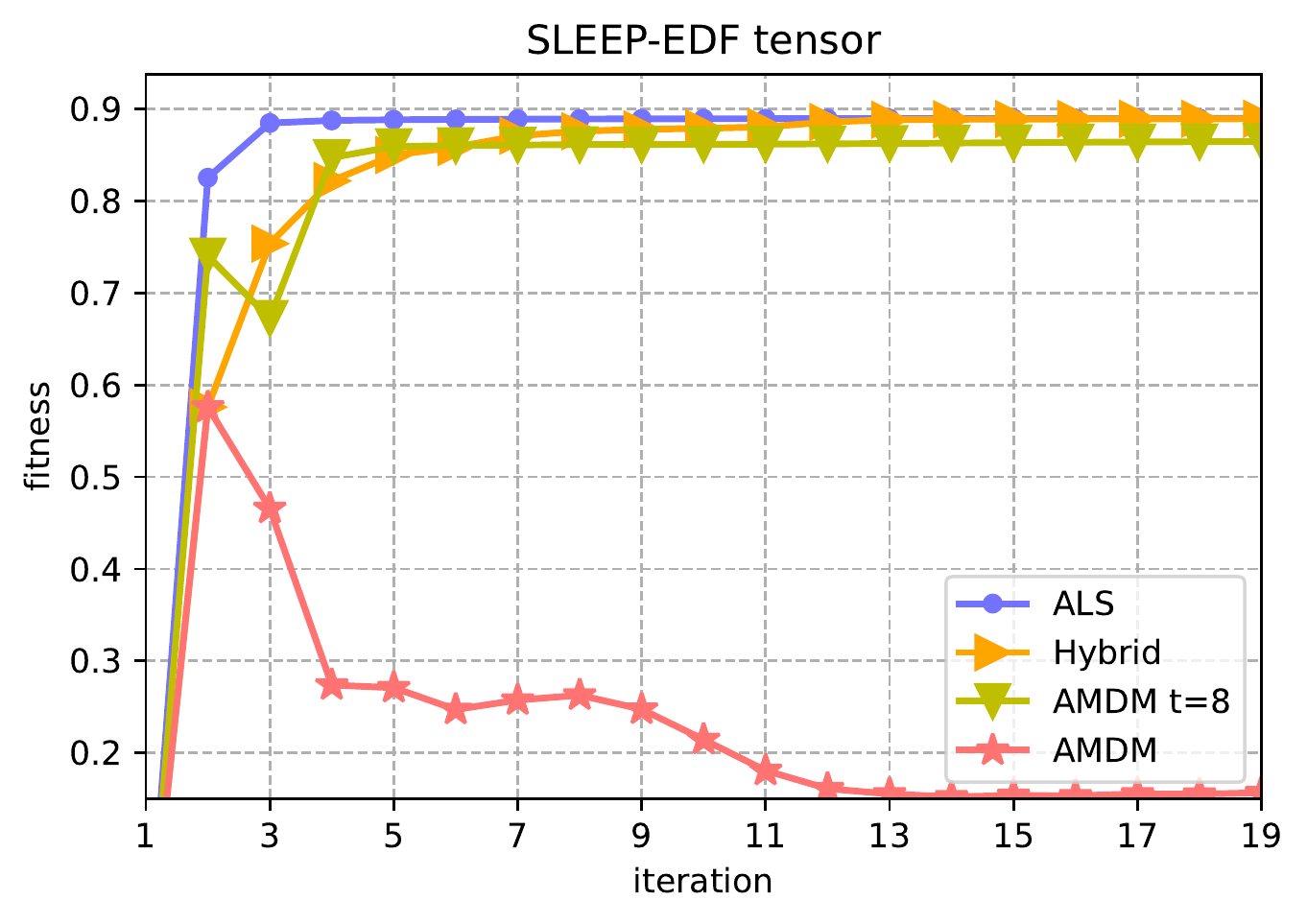}} \end{subfigure}
\begin{subfigure}[SLEEP tensor condition number] {\label{fig:resplot}\includegraphics[width=0.45\textwidth, keepaspectratio]{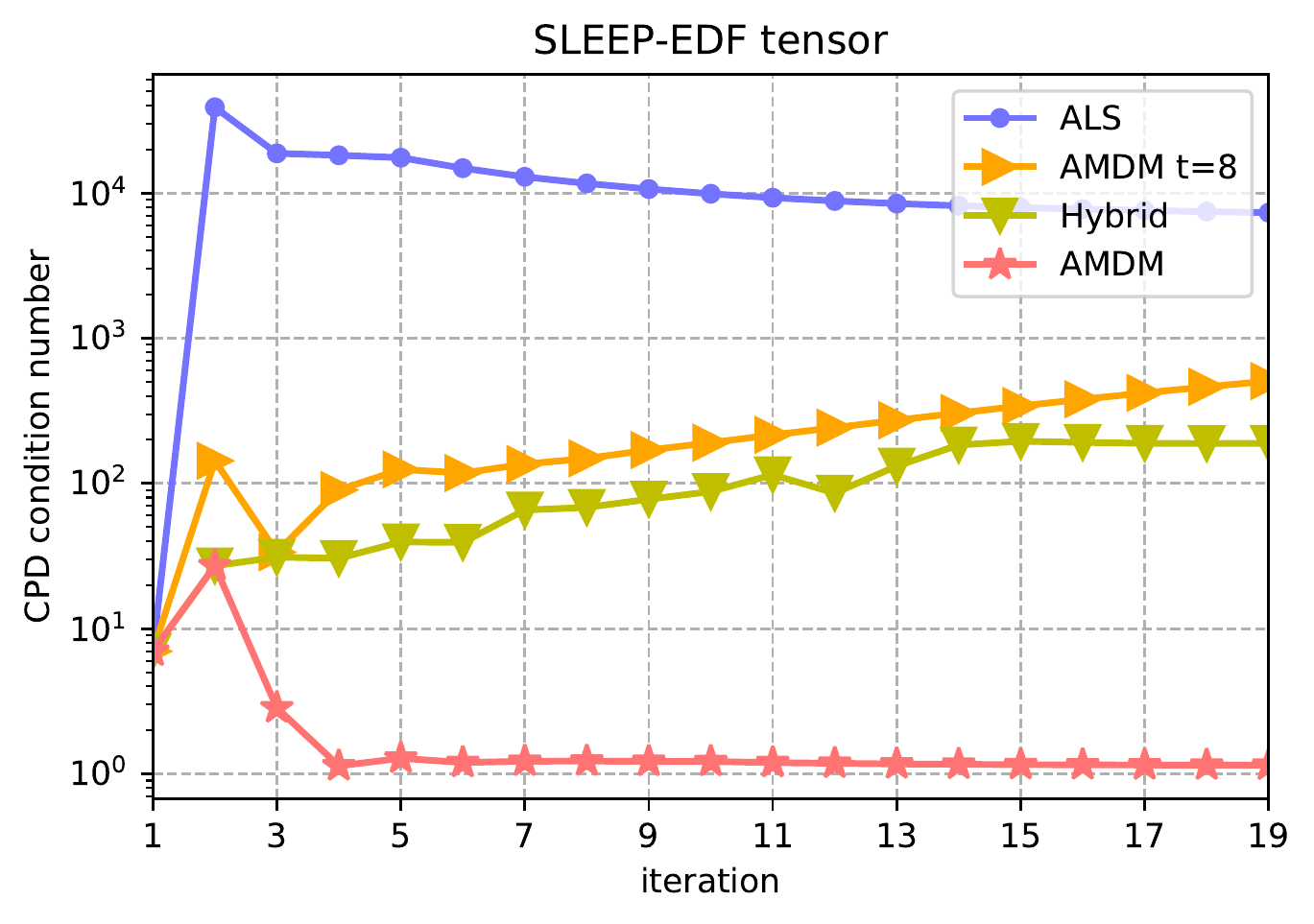}} \end{subfigure}
\caption{SLEEP-EDF tensor of dimensions $2048\times  14 \times 129 \times 86$ approximated with CP rank $R=10$, where $t$ is the singular value threshold in Algorithm~\ref{alg:mnorm}. }
\label{fig:SLEEP}
\end{figure*}
\begin{figure*}[t]
\centering
\begin{subfigure}[MGH tensor fitness] {\label{fig:proba}\includegraphics[width=0.45\textwidth, keepaspectratio]{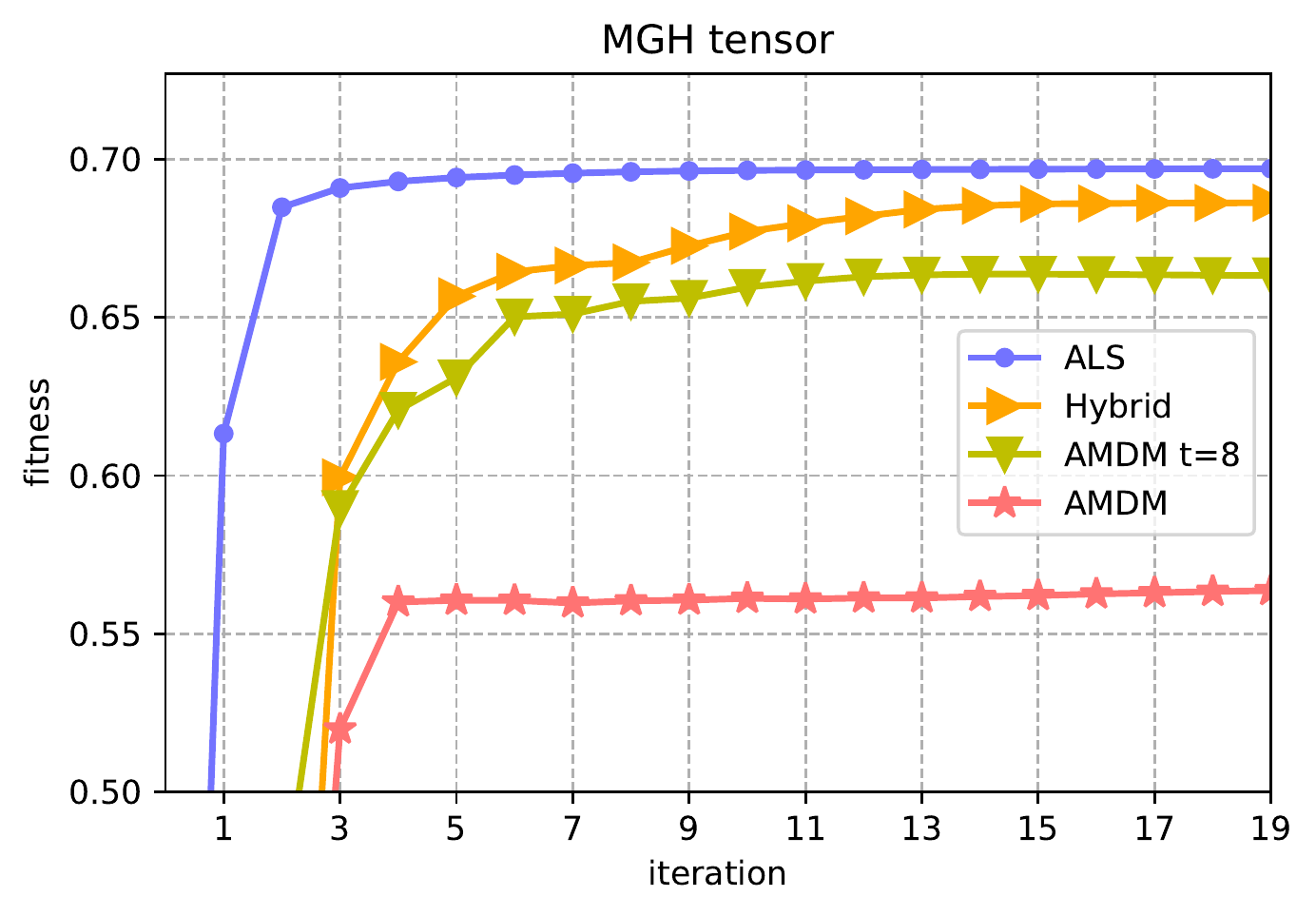}} \end{subfigure}
\begin{subfigure}[MGH tensor condition number] {\label{fig:resplot}\includegraphics[width=0.45\textwidth, keepaspectratio]{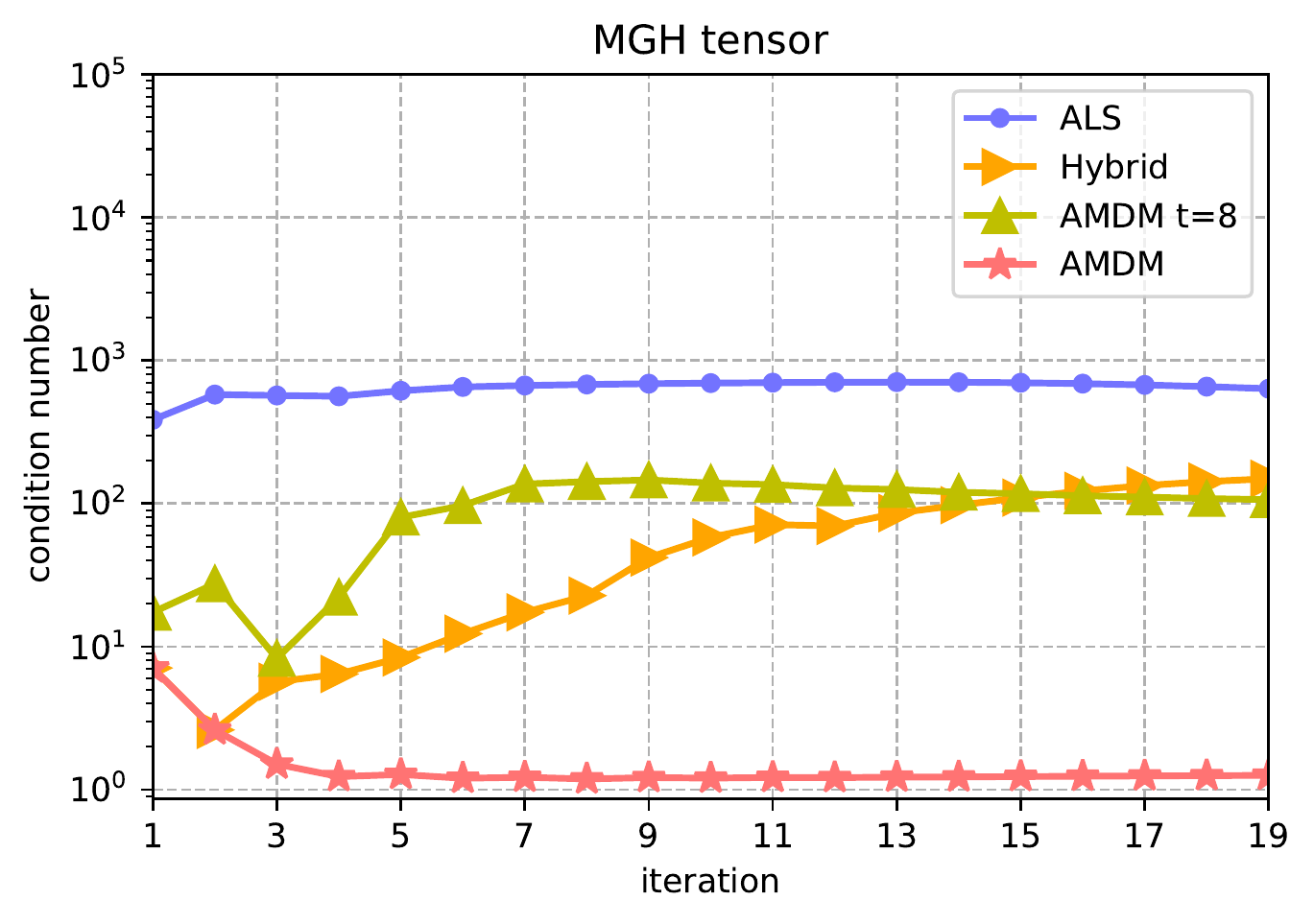}} \end{subfigure}
\caption{MGH tensor of dimensions $2048 \times 12 \times  257 \times 43$ approximated with CP rank $R=10$, where $t$ is the singular value threshold in Algorithm~\ref{alg:mnorm}}
\label{fig:MGH}
\end{figure*}
\begin{figure*}[t]
\centering
\begin{subfigure}[Amino acid tensor fitness] {\label{fig:proba}\includegraphics[width=0.45\textwidth, keepaspectratio]{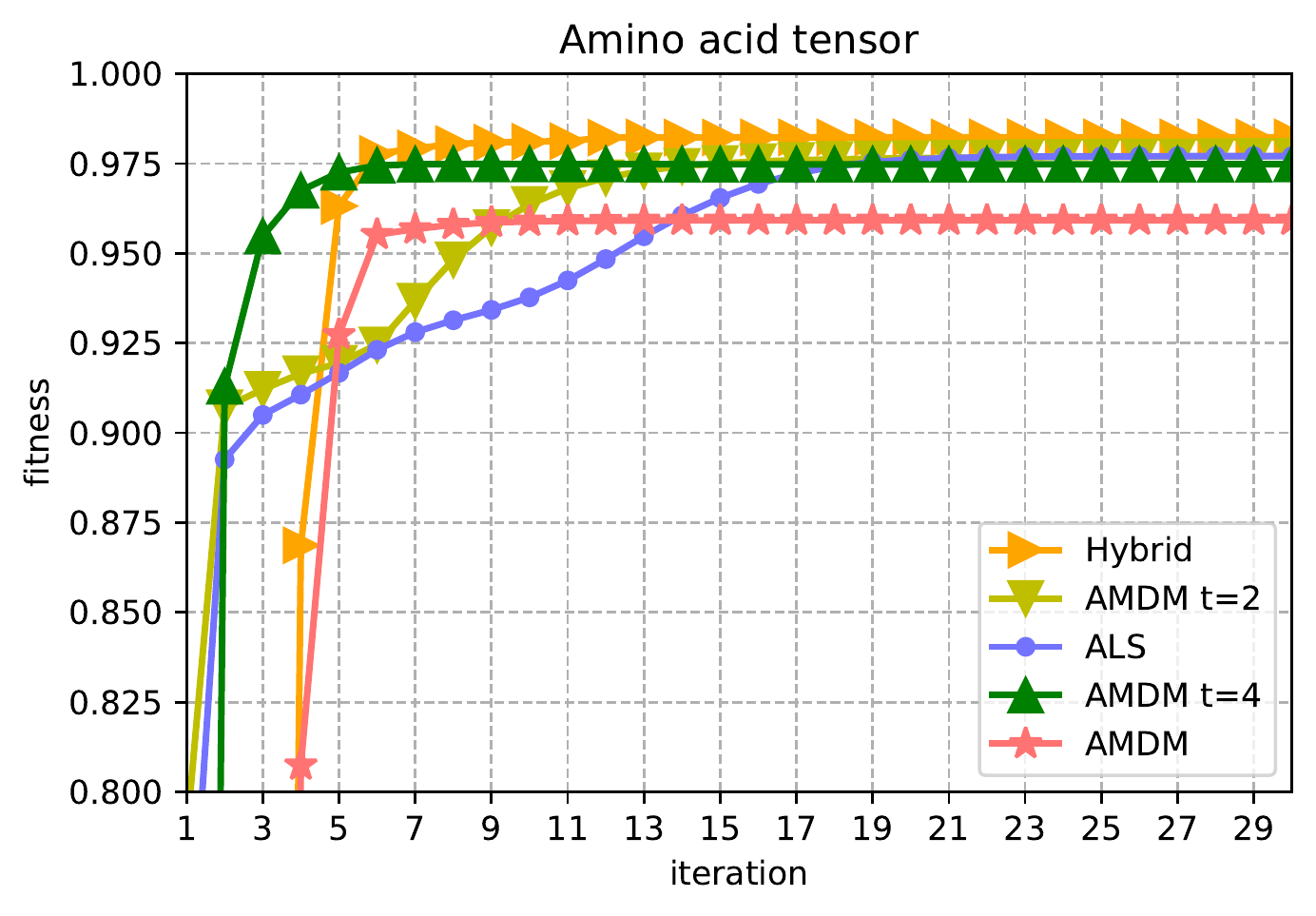}} \end{subfigure}
\begin{subfigure}[Amino acid tensor condition number] {\label{fig:resplot}\includegraphics[width=0.45\textwidth, keepaspectratio]{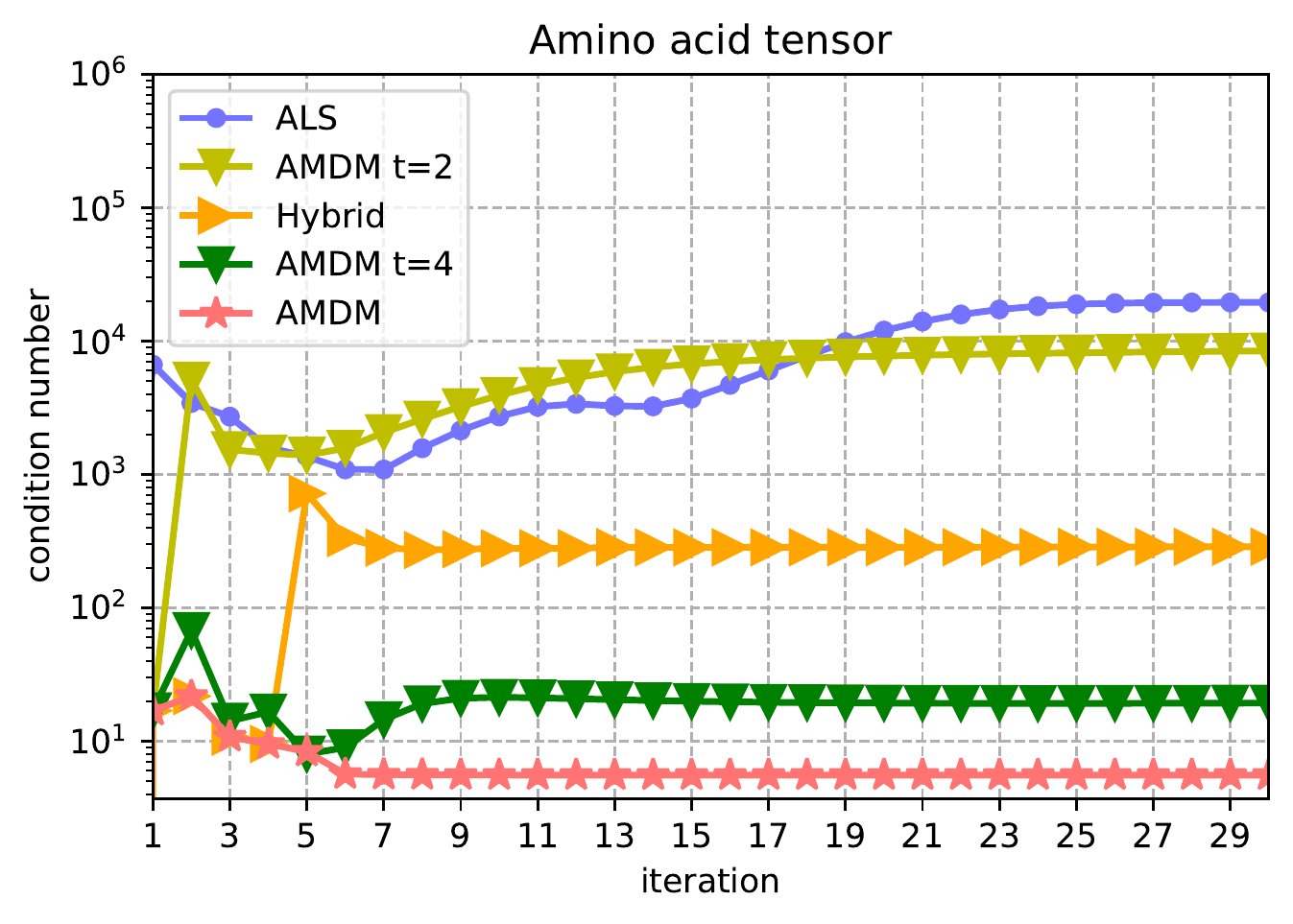}} \end{subfigure}
\caption{Amino acid tensor of dimensions $5 \times 61 \times 201$ approximated with CP rank $R=5$, where $t$ is the singular value threshold in Algorithm~\ref{alg:mnorm} }
\label{fig:amino}
\end{figure*}
\begin{figure}
    \centering
    \includegraphics[width=0.45\textwidth, keepaspectratio]{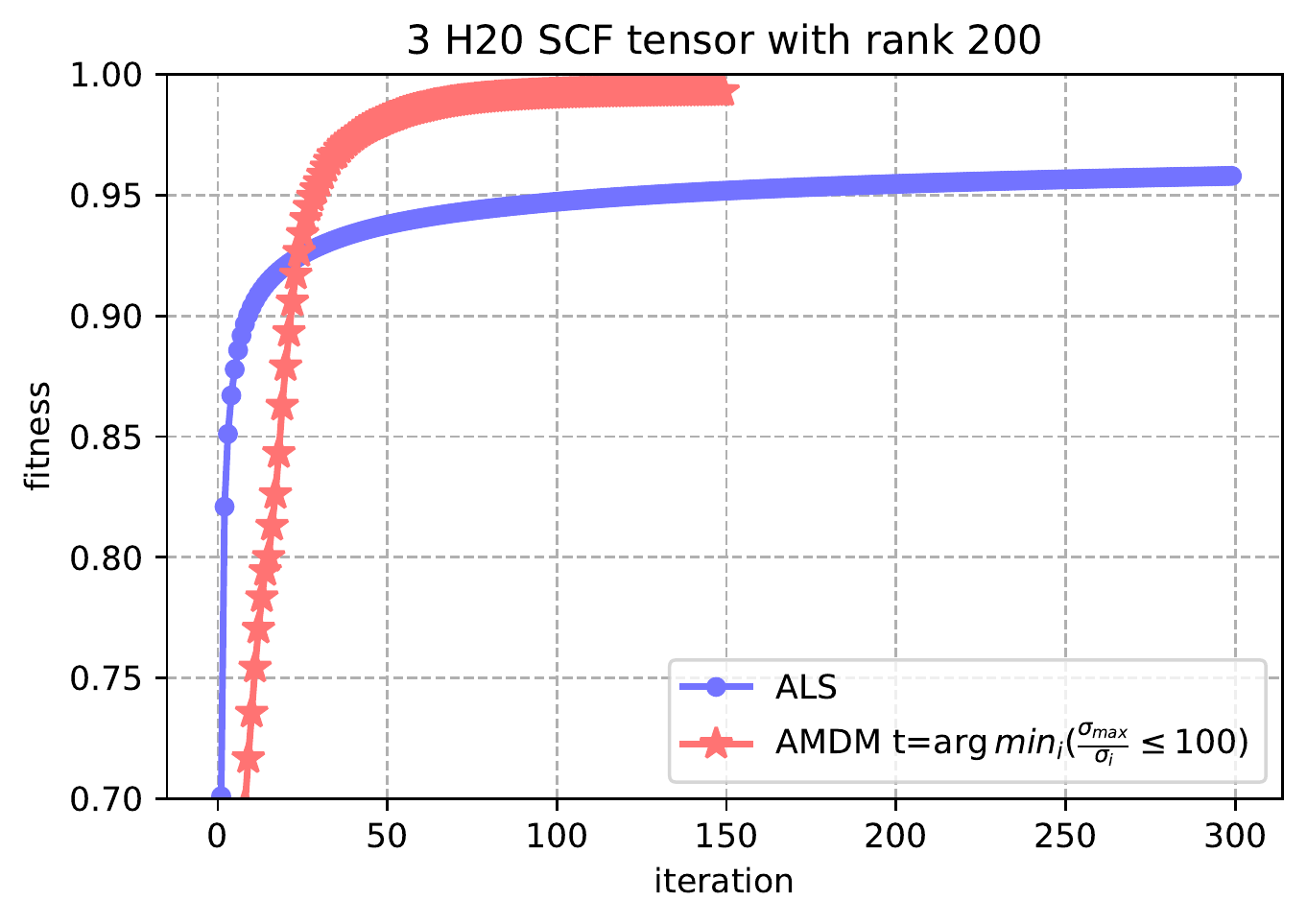}
    \caption{SCF tensor of size $339 \times 21 \times 21$ approximated with CP rank $R=200$,  where $t$ is the singular value threshold in Algorithm~\ref{alg:mnorm}.}
    \label{fig:scf_tensor}
\end{figure}

We plot the probability of convergence to the desired decomposition for synthetic tensors as described in~\ref{lem:err_approx_mnorm} with respect to the $\epsilon_\perp$ to verify our theoretical results in Figure~\ref{fig:probability}. We construct these tensors by constructing first $R/2$ columns of the factors with random matrices and then constructing the other half by projecting it onto the orthogonal complement of the column space of the first half and adding Gaussian noise of amplitude $\epsilon_\perp$ to the same. We construct $100$ such tensors and for each tensor we consider $5$ initial guesses which are $\epsilon$ away from the desired decomposition as described in~\ref{lem:err_approx_mnorm}. We plot the probability of convergence over these $100$ tensors by considering if atleast $1$ initial guess is within $10^{-9}$ of the desired factors. We observe that the probability of convergence to the desired decomposition is $1$ when the $\epsilon_\perp$ is small, irrespective of the size and $\epsilon$, thereby verifying that the first half of CP decomposition is a stationary point. The probability decreases as we increase $\epsilon_\perp$, i.e., the decomposition converges to different stationary points for these tensors.

We compare ALS and different variants of AMDM for computing approximate CP decomposition of synthetic tensors in Figure~\ref{fig:noisy} and application tensors that admit approximations with a low CP rank in Figure~\ref{fig:SLEEP},
Figure~\ref{fig:MGH}, Figure~\ref{fig:amino}, and Figure~\ref{fig:scf_tensor}. We plot the fitness and the condition number of the CP decomposition to compare ALS and variants of AMDM. The integer associated with AMDM $t= \#$ corresponds to the number of singular values inverted for each factor or best rank$-t$ approximation $\mat A_1$ in Equation~\eqref{eq:M_in_gen_AMDM}. The hybrid algorithm starts by using threshold $t=R$ in Algorithm~\ref{alg:mnorm}, i.e., starts with Algorithm~\ref{alg:mnorm_basic} and gradually decreases the threshold to $0$ to recover ALS algorithm.

In Figure~\ref{fig:noisy}, we compute a rank $10$ CP decomposition of the \textit{Collinearity} tensor with collinearity $C=0.9$ and exact CP rank $R=10$ with added Gaussian noise tensor. Each entry of the noise tensor is distributed normally with mean $\mu = 0$ and standard deviation $\sigma = 0.001$. We observe that the AMDM algorithm maintains a low condition number while reaching a high fitness for both the input tensor and the underlying tensor, while ALS algorithm reaches a higher fitness at the cost of highly ill conditioned decomposition.  For the hybrid algorithm, one less singular value is inverted after every $10$ iterations, leading to a decomposition with fitness as high as ALS and conditioning as good as the AMDM algorithm. We observe a similar behaviour for different dimensions and CP rank aproximations of the tensor, suggesting that there maybe multiple optimal decompositions for such problems.

In Figure~\ref{fig:SLEEP} and Figure~\ref{fig:MGH} , we compute the CP decomposition of the SLEEP-EDF tensor and MGH tensor with CP rank $R=10$. We consider several variants of hybrid Algorithm~\ref{alg:mnorm}. For the hybrid algorithm, one less singular value is inverted after every iteration. We clearly see a pattern in both the tensors that if lesser singular values are inverted then the fitness is higher and the CPD condition number is larger. We also see that the hybrid algorithm is able to achieve a fitness almost as high as ALS while maintaining a lower condition number.  The condition number of decomposition with the hybrid algorithm is about $3.2$x lower for the MGH tensor with an absolute difference in fitness being $0.01$ or $0.015$\%, whereas the condition number is about $34.6$x lower for the SLEEP tensor with an absolute difference fitness $0.0004$ with hybrid algorithm being more accurate.

In Figure~\ref{fig:amino}, we compute the CP decomposition of Amino acid tensor with rank $R=5$. We have similar observations for the fitness and condition numbers of variants of the AMDM algorithm and ALS. In this case, the hybrid algorithm and AMDM with $t=2$ achieve a better fitness than ALS. The maximum fitness for hybrid algorithm is $0.982$ whereas the maximum fitness for ALS is $0.977$. The condition number of ALS is about $69$ times higher than that of the hybrid algorithm. Note that the fitness for AMDM (all singular values inverted) is $0.959$ with a condition number equal to $5.56$ indicating that the factor matrices have almost orthogonal columns.

In Figure~\ref{fig:scf_tensor}, we compute CP decomposition of the SCF tensor with rank $R=200$ (exceeding 2 of the 3 tensor dimensions). We use a relative tolerance criteria for computing $t = \text{argmin}_i (\frac{\sigma_{\text{max}}}{\sigma_i} < 100)$ in the AMDM algorithm, i.e., singular values $\sigma_i$ are inverted only if $\frac{\sigma_{\text{max}}}{\sigma_i}< 100$, where $\sigma_{\text{max}}$ is the maximum singular value. The hybrid algorithm outperforms ALS in terms of fitness by reaching $0.993$ fitness in $150$ iterations whereas ALS reaches $0.96$ in $300$ iterations.

\section{Conclusion}
\label{sec:cnc}
In this work, we have proposed an alternative optimization algorithm, AMDM, to compute a CP decomposition of the tensor. This algorithm achieves superlinear local convergence for exact CP rank problems when CP rank is smaller than or equal to all the mode lengths of the tensor with the same asymptotic computational cost as that of ALS. For approximating a tensor via CP decomposition, we theoretically show that the algorithm locally converges to the stationary points of~\eqref{eq:f_tsr} for tensors with special CP structure. Although, the existence of these stationary points for any tensor is an open problem, we empirically confirm that the AMDM algorithm converges to these stationary points for various tensors. Viewing the algorithm as minimizing a Mahalanobis distance helps in generalization of the method for CP rank larger than the mode lengths and interpolate between AMDM and ALS algorithms. We also formulate an efficient way to compute the CPD condition number to track the condition of the decomposition throughout the algorithm. Our numerical experiments confirm that interpolation of algorithms between AMDM and ALS leads to a better conditioned decomposition without significant difference in fitness as compared to ALS for synthetic as well as most of the tested real world tensors. We provide an intuitive reasoning of this phenomenon and leave the detailed analysis as a future direction of research.

\section{Acknowledgments}
\label{sec:ack}
The authors would like to thank Ardavan (Ari) Afshar for detailed discussions about his work on minimizing Wasserstein distance between tensors from which this work is derived. The authors would also like to thank Jimeng Sun, Cheng Qian and Chaoqi Yang for having fruitful discussions and providing datasets which motivated this work. Navjot Singh and Edgar Solomonik were supported by the US NSF OAC SSI program, award No.\ 1931258. 
%\navjot{would this be the right place to acknowledge Ari ? }
%\edgar{Yes, would be good to also acknowledge Jimeng, Cheng, and Chaoqi.}
%This work used the Extreme Science and Engineering Discovery Environment (XSEDE), which is supported by National Science Foundation grant number ACI-1548562.

\begin{appendix}

\section{Computing the Condition Number of a CP Decomposition}
\label{sec:app}
It has been shown that the CPD condition number is the reciprocal of the smallest singular value of a matrix called Terracini's matrix. This matrix consists of the orthogonal basis for the tangent space of each of the rank-$1$ components of the reconstructed tensor. We will refer to the normalized condition number as the condition number of CPD and we refer the reader to~\cite{breiding2018condition,vannieuwenhoven2017condition} for details about how a notion of condition number of a CP decomposition is defined and derived. Consider an equidimensional order $3$ real tensor $\tsr{X}$ with mode length $s$. Let the CPD approximation of rank $R$ be given by $[\![ \vcr \lambda ; \mat{A}, \mat{B}, \mat{C} ]\!]$, then the Terracini's matrix $\mat U$ is $\mat U = [\mat U_1 \ldots \mat U_R]$, where $\forall i \in \{1,\ldots, R\}$, 
\begin{align*}
    \mat U_i = [\vcr a_i \otimes \vcr b_i \otimes \vcr c_i \quad \mat Q^\perp_{\vcr a_i} \otimes \vcr b_i \otimes \vcr c_i \quad \vcr a_i  \otimes \mat Q^\perp_{\vcr b_i}  \otimes \vcr c_i \quad  \vcr a_i \otimes \vcr b_i \otimes \mat Q^\perp_{\vcr c_i}],
\end{align*}
and $\mat Q^\perp_{\vcr a_i} \in \mathbb{R}^{s \times (s-1)}$ is an orthogonal basis of the orthogonal complement of $\vcr a_i$, and $\mat Q^\perp_{\vcr b_i}$, and $\mat Q^\perp_{\vcr c_i}$ are defined similarly. Consequently, the Terrracini's matrix is of size $s^3 \times R(3s- 2)$, and the computational cost of computing the smallest singular value via a Krylov subspace method is $O(s^5R^2)$. For an order $N$ tensor, this cost is $O(N^2s^{N+2}R^2)$ and therefore expensive to compute for decompositions with moderately large mode lengths.

The cost of computing the condition number can be decreased significantly for when rank of the CP decomposition is lesser than all the mode lengths of the input tensor, i.e., if $R < s$. Assume that $R \leq s$, then since the condition number is invariant to orthogonal transformations~\cite{breiding2018condition}, for a CPD of an order $3$ tensor, \[\kappa\big([\![ \vcr \lambda ; \mat{A}, \mat{B}, \mat{C} ]\!]\big) = \kappa\big([\![ \vcr \lambda ; \mat Q_{\mat A}^T \mat{A}, \mat Q^T_{\mat B}\mat{B}, \mat Q^T_{\mat C}\mat{C} ]\!]\big),\]
where $\mat Q_{\mat A} \in \mathbb{R}^{s \times s} = [ \mat Q^{(1)}_{\mat A} \mat Q^{(2)}_{\mat A}]$, and
the columns of
$\mat Q^{(1)}_{\mat A} \in \mathbb{R}^{s \times R}$ 
are an orthogonal basis of the column space of $\mat A$, while the columns of $\mat Q^{(2)}_{\mat A} \in \mathbb{R}^{s \times (s-R) }$ are an orthogonal basis for the orthogonal complement of the column space of $\mat A$. We  define $\mat Q_{\mat B} = [ \mat Q^{(1)}_{\mat B} \mat Q^{(2)}_{\mat B}]$ and $\mat Q_{\mat C} =[ \mat Q^{(1)}_{\mat C} \mat Q^{(2)}_{\mat C}]$ similarly. The transformed Terracini's matrix $\bar{\mat U} = [ \bar{\mat U_1} \ldots \bar{\mat U_R}]$, where $\forall i \in \{1,\ldots, R\}$,
\begin{align*}
    \bar{\mat U}_i = %\begin{bmatrix}
    [
    \underbrace{\mat Q^T_{\mat A}\vcr a_i \otimes \mat Q^T_{\mat B}\vcr b_i \otimes \mat Q^T_{\mat C}\vcr c_i}_{\bar{\mat U}_i^{(1)}} \quad
    \underbrace{\bar{\mat Q}^\perp_{\vcr a_i} \otimes \mat Q^T_{\mat B}\vcr b_i \otimes \mat Q^T_{\mat C}\vcr c_i}_{\bar{\mat U}_i^{(2)}}
    \quad
    \underbrace{\mat Q^T_{\mat A}\vcr a_i  \otimes \bar{\mat Q}^\perp_{\vcr b_i}  \otimes \mat Q^T_{\mat C}\vcr c_i}_{{\bar{\mat U}_i^{(3)}}}
    \quad
    \underbrace{\mat Q^T_{\mat A}\vcr a_i \otimes \mat Q^T_{\mat B}\vcr b_i \otimes \bar{\mat Q}^\perp_{\vcr c_i}}_{{\bar{\mat U}_i^{(4)}}}
    ],
\end{align*}
where $\bar{\mat Q}^\perp_{\vcr a_i} = \mat Q^T_{\mat A}\mat Q^\perp_{\vcr a_i} $ is an orthogonal basis of the orthogonal complement of $\mat Q^T_{\mat{A}}\vcr a_i$, and $\bar{\mat Q}^\perp_{\vcr b_i}$, and $\bar{\mat Q}^\perp_{\vcr c_i}$ are defined similarly.
Note that $\bar{\mat U}_i^{(j)}\bar{\mat U}_i^{(k)}=\mat 0$ for $j\neq k$, since $\bar{\mat Q}^\perp{}^T\vcr{a}_i\mat{Q}_{\mat{A}}^T\vcr{a}_i=0$.
Consequently,
\[\sigma_{\min}(\mat U)  = \min_{j\in\{1,2,3,4\}} \sigma_{\min}([\mat U_1^{(j)}\quad  \ldots \quad\mat U_R^{(j)}]).\]
After this transformation, we can obtain a reduced form of smaller dimensions each of the four matrices to compute the condition number more efficiently.
Note that, $\mat Q_{\mat A}^T \vcr{a}_i = \begin{bmatrix} \mat Q_{\mat A}^{(1)}{}^T \vcr{a}_i \\ \vcr 0 \end{bmatrix}$  and similar for $\mat B$ and $\mat C$, we have that
\[\sigma_{\min}([\mat U_1^{(1)}\quad \ldots\quad \mat U_R^{(1)}]) = \sigma_{\min}([\mat Q^{(1)}_{\mat A}{}^T\vcr a_1\otimes \mat Q^{(1)}_{\mat B}{}^T\vcr b_1 \otimes \mat Q^{(1)}_{\mat C}{}^T\vcr c_1 \quad \ldots \quad \mat Q^{(1)}_{\mat A}{}^T\vcr a_R\otimes \mat Q^{(1)}_{\mat B}{}^T\vcr b_R \otimes \mat Q^{(1)}_{\mat C}{}^T\vcr c_R ]).\]
The reduced matrix above is of dimension $R^3\times R$ instead of $s^3\times R$.
Further, we can choose the columns $\mat Q^\perp_{\vcr a_i}$ so that $\bar{\mat Q}^\perp_{\vcr a_i} = \mat Q^T_{\mat A}\mat Q^\perp_{\vcr a_i} = \begin{bmatrix} \mat Q^\perp_{\mat Q^{(1)}_{\mat A}{}^T\vcr a_1} & \mat 0 \\ \mat 0 & \mat I \end{bmatrix}$, and similar for $\mat B$ and $\mat C$.
Consequently, for $j=2$,
\begin{align*}
\sigma_{\min}([\mat U_1^{(2)}\quad \cdots \quad \mat U_R^{(2)}]) 
&= \sigma_{\min}\Bigg(\Bigg[\begin{bmatrix} \mat Q^\perp_{\mat Q^{(1)}_{\mat A}{}^T\vcr a_1} & \mat 0 \\ \mat 0 & \mat I \end{bmatrix} \otimes \mat Q^{(1)}_{\mat B}{}^T\vcr b_1 \otimes \mat Q^{(1)}_{\mat C}{}^T\vcr c_1\quad \ldots \Bigg]\Bigg) \\
&= \min\{\sigma_{\min}([\mat Q^\perp_{\mat Q^{(1)}_{\mat A}{}^T\vcr a_1} \otimes \mat Q^{(1)}_{\mat B}{}^T\vcr b_1 \otimes \mat Q^{(1)}_{\mat C}{}^T\vcr c_1\ \  \ldots]), \sigma_{\min}([\mat Q^{(1)}_{\mat B}{}^T\vcr b_1 \otimes \mat Q^{(1)}_{\mat C}{}^T\vcr c_1\ \  \ldots])\} \\
&= \sigma_{\min}([\mat Q^\perp_{\mat Q^{(1)}_{\mat A}{}^T\vcr a_1} \otimes \mat Q^{(1)}_{\mat B}{}^T\vcr b_1 \otimes \mat Q^{(1)}_{\mat C}{}^T\vcr c_1\quad \ldots \quad \mat Q^\perp_{\mat Q^{(1)}_{\mat A}{}^T\vcr a_R}\otimes \mat Q^{(1)}_{\mat B}{}^T\vcr b_R \otimes \mat Q^{(1)}_{\mat C}{}^T\vcr c_R ]),
\end{align*}
and similar for $j=3,4$.
%\[\kappa([\mat U_1^{(j)}\quad  \ldots \quad\mat U_R^{(j)}]) = \kappa([\mat U_1^{(j)}\quad  \ldots \quad\mat U_R^{(j)}])\]
%contains at most $sR^2$ nonzero rows.
%Hence, it suffices to compute the condition number of a reduced matrix, since, e.g.,
%\[\kappa(\mat U_i^{(2)}) = \kappa(\bar{\mat Q}^\perp_{\vcr a_i} \otimes \mat Q^T_{\mat B}\vcr b_i \otimes \mat Q^T_{\mat C}\vcr c_i) = \kappa(\bar{\mat Q}^\perp_{\vcr a_i} \otimes \mat Q^{(1)}{}^T_{\mat B}\vcr b_i \otimes \mat Q^{(1)}{}^T_{\mat C}\vcr c_i).\]
%Overall, we reduce the computation of the condition number of $\mat U$ to computing the condition number of four matrices of dimensions $sR^2\times R(s-1)$.
The dimensions of the above reduced matrix are $R^3\times R(R-1)$, hence a direct computation of the singular value decomposition can be used to compute the condition number with cost $O(R^7)$.
%Further, Lanczos can be used to obtain the singular value with a total cost of $O(R^6)$, since $\mat Q^\perp_{\mat Q^T\vcr a_1}$ may be written as a rank-1 perturbation of the identity.
%Hence, Lanczos would require $O(R^2)$ matrix-vector products that can be performed with cost $O(R^4)$ each.

All the above arguments can be generalized to an order $N$ non equidimensional tensor. Therefore, we showed that the condition number of CPD is invariant to the following transformation
\[\kappa\big([\![ \vcr \lambda ; \mat{A}_1 \ldots , \mat A_N ]\!]\big) = \kappa\big([\![ \vcr \lambda ; \mat Q^{(1)}_{\mat A_1}{}^T \mat{A}_1, \ldots , \mat Q^{(N)}_{\mat A_N}{}^T\mat{A}_N ]\!]\big),\]
where $\forall i \in \{1,\ldots, N \}$, columns of $\mat Q^{(1)}_{\mat A_i}$ are an orthonormal basis of the column space of $\mat A_i$.

\end{appendix}

\bibliographystyle{abbrv}
\bibliography{paper}

\end{document}